
\documentclass{article}

\usepackage[utf8]{inputenc}
\usepackage[letterpaper]{geometry} 
\AtBeginDocument{
  \newgeometry{
    textheight=9in,
    top=1in,
    headheight=12pt,
    headsep=25pt,
    footskip=30pt
  }
}

\PassOptionsToPackage{hyphens}{url}
\usepackage[hidelinks]{hyperref}
\usepackage{graphicx}
\usepackage{amsfonts}
\usepackage{amsmath}
\usepackage{amsthm}
\usepackage{amssymb}
\usepackage{verbatim}
\usepackage{physics}
\usepackage{booktabs}
\usepackage{colortbl}
\usepackage{authblk}
\usepackage{aligned-overset}

\usepackage[framemethod=tikz]{mdframed}
\usetikzlibrary{shadows}
\definecolor{captiongray}{HTML}{555555}
\mdfsetup{%
  innertopmargin=2ex,
  innerbottommargin=1.8ex,
  linecolor=captiongray,
  linewidth=0.5pt,
  roundcorner=5pt,
  shadow=true,
  shadowcolor=black!05,
  shadowsize=4pt
}
\newenvironment{hoggfigure}{%
  \begin{figure}[tp]%
    \begin{mdframed}%
    \color{captiongray}}{%
    \end{mdframed}%
  \end{figure}}

\definecolor{rb4}{HTML}{27408B}

\newcommand{\todo}[1]{\textcolor{red}{#1}}

\theoremstyle{definition}
\newtheorem{definition}{Definition}

\newtheorem{theorem}{Theorem}
\newtheorem*{theorem*}{Theorem}
\newtheorem{proposition}{Proposition}
\newtheorem*{proposition*}{Proposition}
\newtheorem*{remark}{Remark}
\newtheorem{lemma}{Lemma}
\newtheorem*{properties}{Properties}

\newcommand{\tensorname}[2]{{#1}_{(#2)}}
\newcommand{\tensor}[2]{$\tensorname{#1}{#2}$-tensor}
\newcommand{\tensors}[2]{$\tensorname{#1}{#2}$-tensors}
\NewDocumentCommand\contract{mg}{%
    \ensuremath{\iota_{#1}\IfNoValueTF{#2}{}{\qty(#2)}}%
}
\DeclareMathOperator*{\argmax}{arg\,max} 

\newcommand{\sectionname}{Section}

\newcommand{\figref}[1]{\figurename~\ref{#1}}

\pagestyle{myheadings}
\markboth{foo}{Gregory, Hogg, Blum-Smith, Arias, Wong, \& Villar / GeometricImageNet}

\linespread{1.08}
\frenchspacing\sloppy\sloppypar\raggedbottom
\interfootnotelinepenalty=10000
\allowdisplaybreaks

\title{\bfseries%
Equivariant geometric convolutions \\
for emulation of dynamical systems
}
\author[1]{Wilson~G.~Gregory}
\author[2,3,4]{David~W.~Hogg}
\author[1]{Ben~Blum-Smith}
\author[5]{Maria~Teresa~Arias}
\author[1]{Kaze~W.~K.~Wong}
\author[1,6]{Soledad~Villar}
\affil[1]{Department of Applied Mathematics and Statistics, Johns Hopkins University, Baltimore, MD, USA}
\affil[2]{Center for Cosmology and Particle Physics, Department of Physics, New York University, New York, NY, USA}
\affil[3]{Max-Planck-Institut f\"ur Astronomie, Heidelberg, Germany}
\affil[4]{Center for Computational Astrophysics, Flatiron Institute, New York, NY, USA}
\affil[5]{Department of Mathematics, Universidad Autónoma de Madrid, Madrid, Spain}
\affil[6]{Mathematical Institute for Data Science, Johns Hopkins University, Baltimore, MD, USA}

\date{}

\begin{document}

\maketitle\thispagestyle{empty}

\paragraph{Abstract:}
Machine learning methods are increasingly being employed as surrogate models in place of computationally expensive and slow numerical integrators for a bevy of applications in the natural sciences.
However, while the laws of physics are relationships between scalars, vectors, and tensors that hold regardless of the frame of reference or chosen coordinate system, surrogate machine learning models are not coordinate-free by default.
We enforce coordinate freedom by using geometric convolutions in three model architectures: a ResNet, a Dilated ResNet, and a UNet.
In numerical experiments emulating 2D compressible Navier-Stokes, we see better accuracy and improved stability compared to baseline surrogate models in almost all cases.
The ease of enforcing coordinate freedom without making major changes to the model architecture provides an exciting recipe for any CNN-based method applied to an appropriate class of problems.

\section{Introduction}

Contemporary natural science features many data sets that are images, lattices, or grids of geometric objects.
These might be observations of intensities (scalars), velocities (vectors), magnetic fields (pseudovectors), or polarizations (2-tensors) on a surface or in a volume.
Any grid of vectors or tensors can be seen as a generalization of the concept of an image in which the intensity in each pixel is replaced with a geometric object --- scalar, vector, tensor, or their pseudo counterparts.
These objects are \emph{geometric} in the sense that they are defined in terms of their transformation properties under geometric operators such as rotation, translation, and reflection.
Likewise, a grid of these objects is also geometric, so we will refer to them as \emph{geometric images}.


There are many questions that we might like to answer about a data set of geometric images.
The images could be the initial conditions of a simulation discretized to a regular grid; see \figref{fig:examples} for some examples.
A critical problem in astronomy, climate science, and many other fields involves modeling the evolution of velocity, pressure, and density fields according to the Navier-Stokes equations.
Traditional numerical solvers are accurate and are considered a robust standard for solving the Navier-Stokes equations, but they can be computationally expensive for systems that require a high resolution. 
Creating surrogate models with machine learning methods has shown promise as an alternative. 
Once trained on the desired spatial and temporal scales, these surrogate models can generate an approximate solution from an initial condition much faster than a traditional solver. 
However, long-term stability in surrogate models remains a concern.

\begin{hoggfigure}
  \begin{center}
    \begin{minipage}[b]{2.5in}\noindent%
      \includegraphics[width=\textwidth]{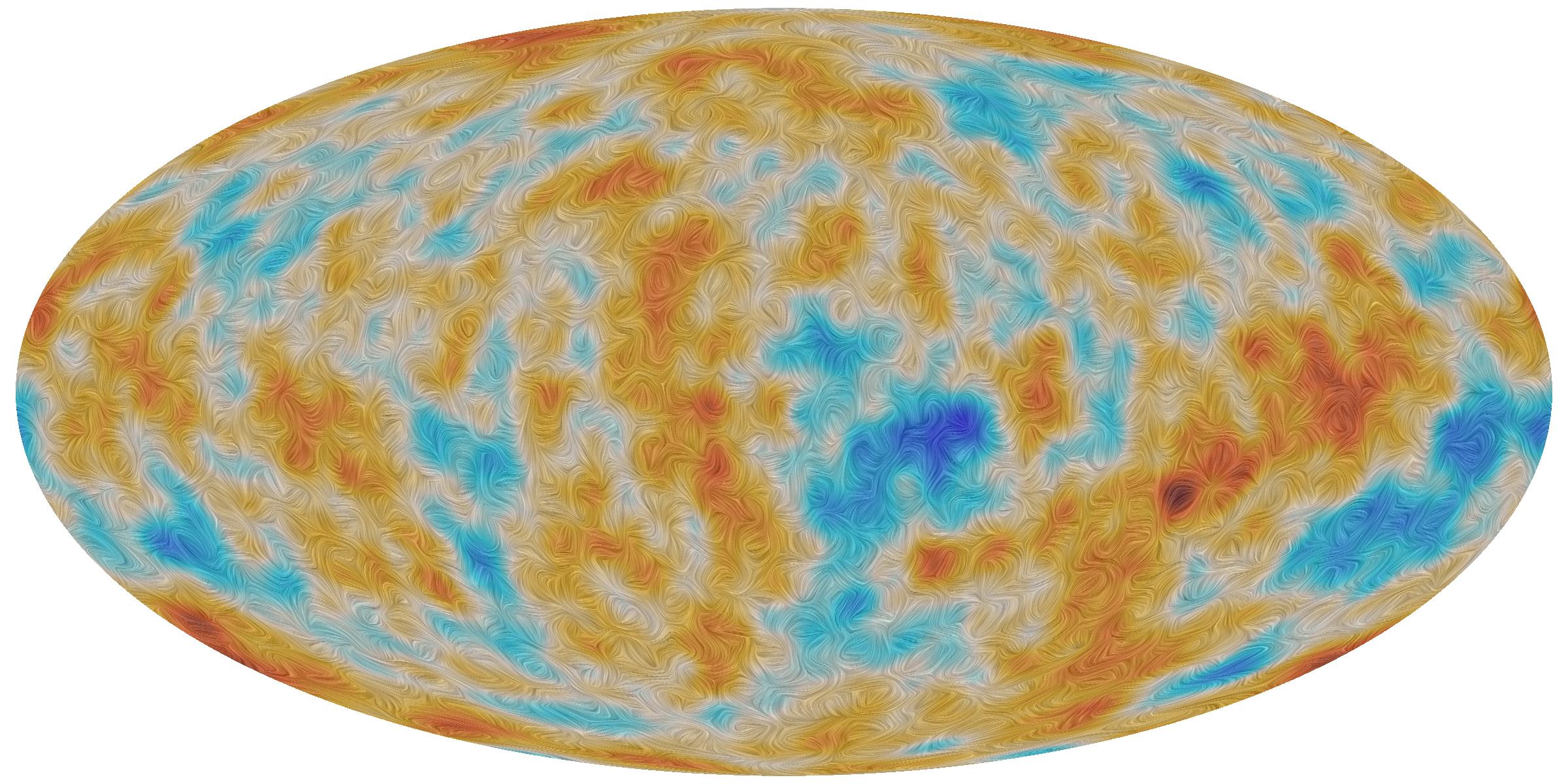}\\
      \textsl{(a)} temperature and polarization
    \end{minipage}
    \begin{minipage}[b]{2in}\noindent%
      \includegraphics[width=\textwidth]{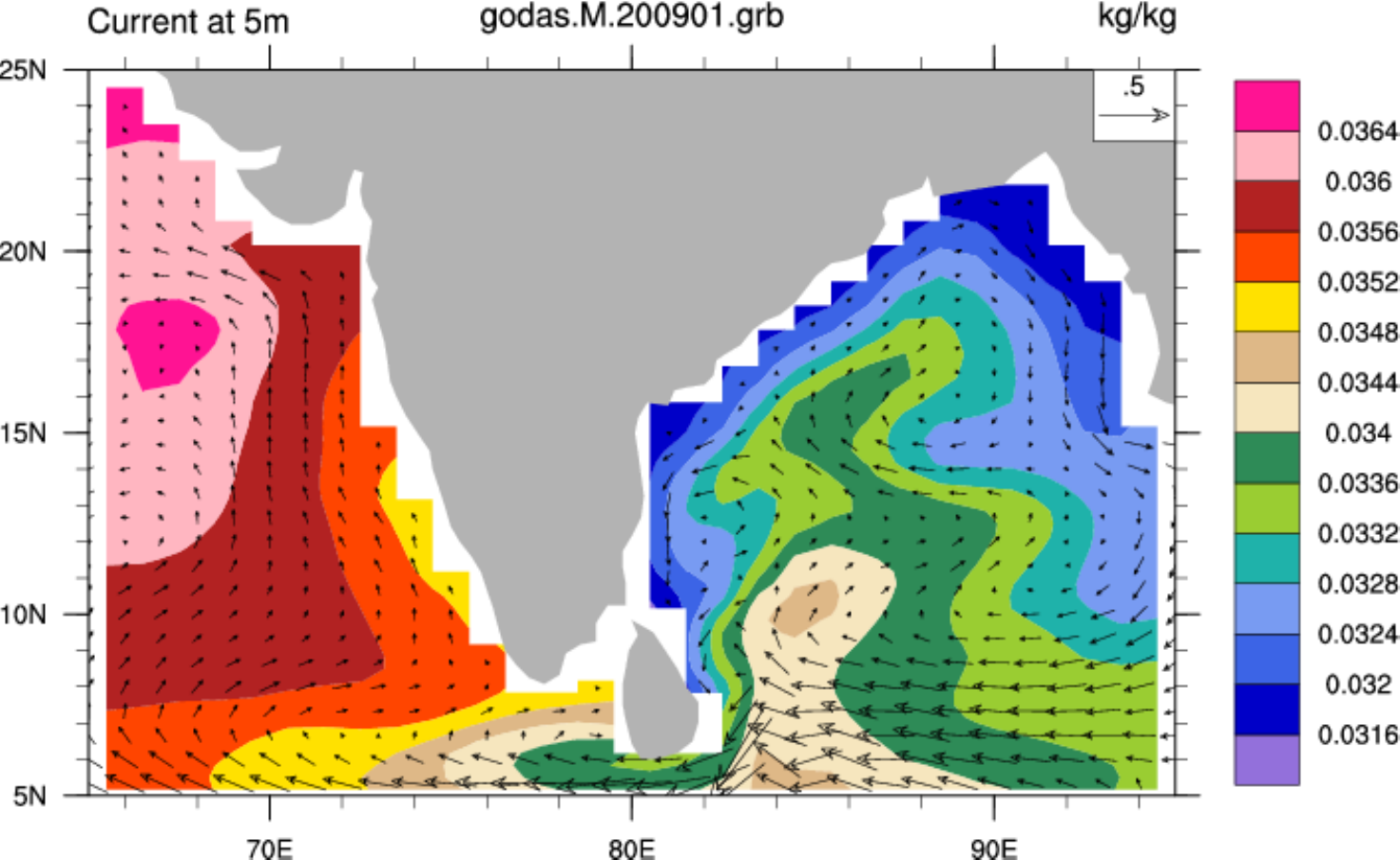}\\
      \textsl{(b)} salinity and current
    \end{minipage}\\[2ex]
    \begin{minipage}[b]{2.5in}\noindent%
      \includegraphics[width=\textwidth]{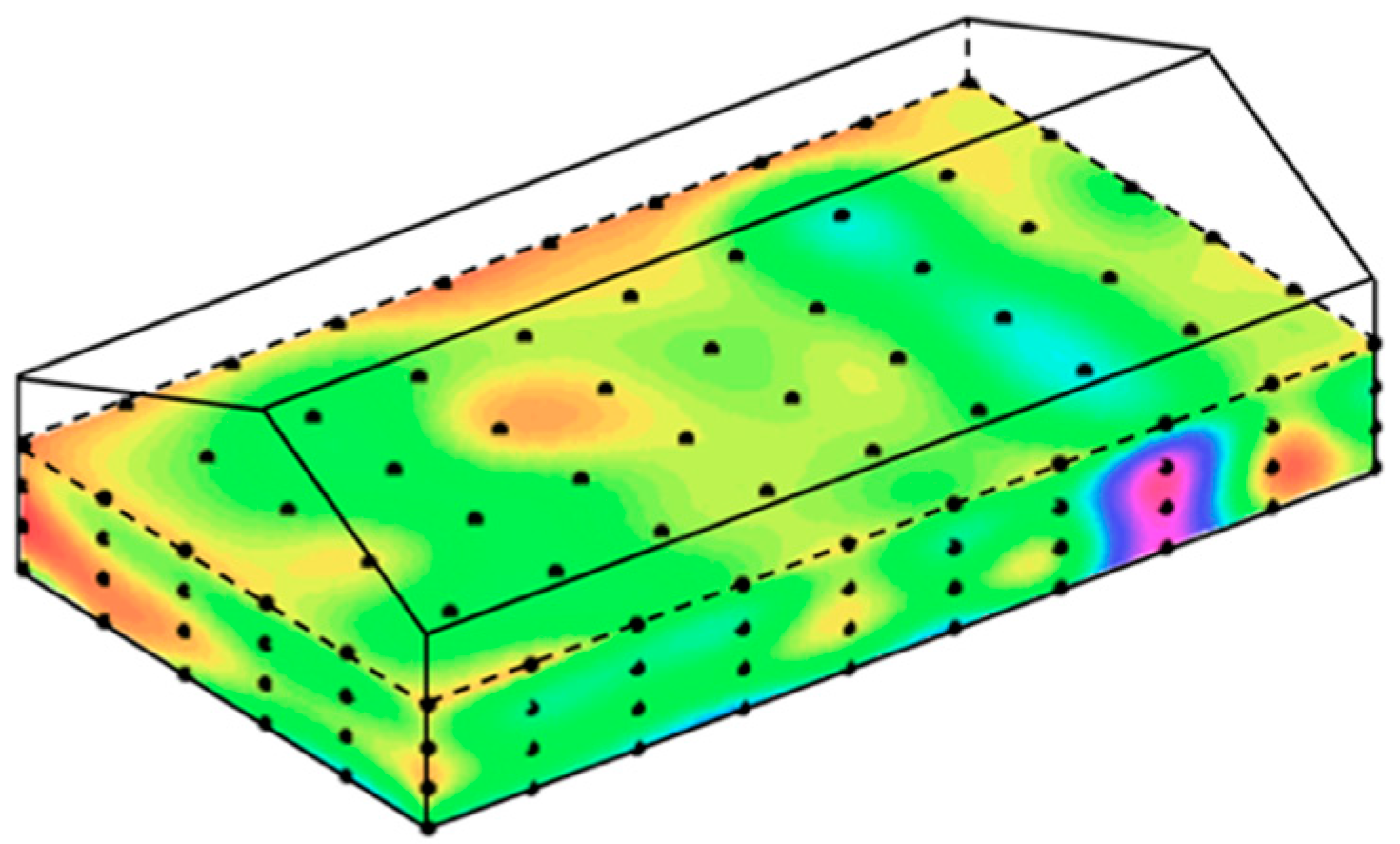}\\
      \textsl{(c)} temperature
    \end{minipage}
    \begin{minipage}[b]{1.8in}\noindent%
      \includegraphics[width=\textwidth]{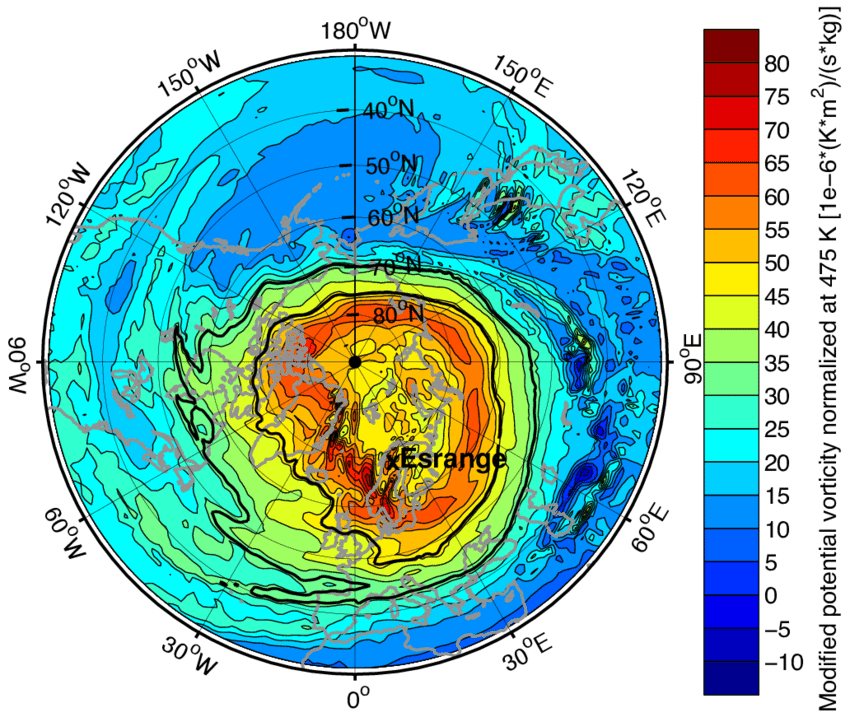}\\
      \textsl{(d)} vorticity
    \end{minipage}\\[2ex]
  \end{center}
  \caption{
  Examples of geometric images in the natural sciences.
  \textsl{(a)}~A visualization of a temperature map and a polarization map from the ESA \textsl{Planck} Mission \cite{planck2015}.
  The color map shows a temperature field (a scalar or \tensor{0}{+}) on the sphere, and the whiskers show the principal eigenvector direction of a \tensor{2}{+} field in two dimensions.
  \textsl{(b)}~Two-dimensional maps of ocean current (arrows; a vector or \tensor{1}{+} field) and ocean salinity (color; a scalar or \tensor{0}{+} field) \cite{climatedataguide}.
  \textsl{(c)}~A three-dimensional map of temperature (a scalar or \tensor{0}{+} field) based on sensors distributed throughout the volume of a granary \cite{granary}.
  \textsl{(d)}~A two-dimensional map of potential vorticity (a pseudoscalar or \tensor{0}{-} field) in the Earth's atmosphere, measured for the purposes of predicting storms \cite{potentialvorticity}.
  \label{fig:examples}}
\end{hoggfigure}

One potential culprit for unstable rollouts is that machine learning models are not coordinate-free by default; they operate on the \emph{components} of the vectors rather than the vectors themselves. 
In typical contexts, the input channels of a convolutional neural network (CNN) are the red, green, and blue channels of a color image; these are then combined arbitrarily in the layers of the CNN.
The naive, flawed approach to applying CNN methods to geometric images is to treat the components of the vector or tensor as independent channels, ignoring how these objects behave under geometric operations.

The fundamental observation inspiring this work is that when an arbitrary function is applied to the components of vectors and tensors, the geometric structure of these objects is destroyed.
There are strict rules, dating back to the early days of differential geometry \cite{ricci}, about how geometric objects can be combined to produce new geometric objects, consistent with coordinate freedom and transformation rules.
These rules constitute a theme of \cite{mcp}, where they are combined into a \emph{geometric principle}.
With the tools of equivariant machine learning, we can make better and more efficient models by incorporating the rules of coordinate freedom.

The concept of equivariance is simple. 
Given a function $f:X\to Y$ and a group $G$ with an action on both $X$ and $Y$, we say $f$ is \textit{equivariant} with respect to $G$ if for all $x \in X$ and $g \in G$ we have $f(g \cdot x) = g \cdot f(x)$.
For equivariant machine learning, we learn a function $f$ over a class of equivariant functions with respect to a relevant group.
Ideally, we would like our group to express all possible coordinate transformations, but this is a very challenging task \cite{villar2024towards}, so in practice, we will consider rotations, reflections, and translations.

The symmetries that these rules suggest are continuous symmetries.
But of course images are usually---and for our purposes---discrete grids of values.
This suggests that instead of the continuous symmetries respected by the tensor objects in the image pixels, there will be discrete symmetries for each geometric image taken as a whole.
We will define these discrete symmetry groups and use them to define a useful kind of group equivariance for functions of geometric images.
When we enforce this equivariance, the convolution filters that appear look very much like the differential operators that appear in discretizations of vector calculus.

The numerical experiments in this work focus on modeling the Navier-Stokes equations which involve scalar fields and vector fields.
However, the model we develop, the \textsl{GeometricImageNet}, can be immediately applied to geometric images of any tensor order or parity.

\paragraph{Our contribution:}
The rest of the paper is organized in the following manner. 
Section \ref{sec:related} discusses related work.
Section \ref{sec:geometric_objects_and_images} defines geometric objects, geometric images, and the operations on each. 
Section \ref{sec:equivariant} discusses the equivariant functions of geometric images with some important results built on the basis of \cite{kondor2018convolution} and \cite{cohen2016group}. 
Sections \ref{sec:architectures} and \ref{sec:experiments} describe how to build a \textsl{GeometricImageNet} and present numerical experiments on compressible Navier-Stokes simulations.
The proofs have been sequestered to the \hyperref[appendix:proofs]{Appendix} along with a larger exploration of related work.

\section{Related work}\label{sec:related}

The difficulty of modeling Navier-Stokes and other PDEs have made the surrogate neural network approach popular in recent years. 
The CNN approach without regard to coordinate freedom is common \cite{stachenfeld2022learned, boltonzanna2019, boltonzanna2020, gupta2022pdearena}, and can be successful with sufficient data.
Some approaches like the Fourier Neural Operator \cite{li2021fourierneuraloperatorparametric} are resolution invariant but not rotationally equivariant.
Other methods have tried to incorporate the physical laws back into ML models under the broad category of physics informed machine learning \cite{karniadakis2021physics, pfortner2022physics}.

Equivariant machine learning is one approach to incorporating physical laws in learned methods by explicitly enforcing the appropriate symmetry in the architecture of the network. 
When we expect our target function to be equivariant to that group, this strategy improves the model's generalization and accuracy (see for instance \cite{elesedy2021provably,wang2021incorporating, huang2024approximately, petrache2023approximation, tahmasebi2023exact}) and is a powerful remedy for data scarcity (see \cite{wang2022data}).
Equivariant networks, in certain cases, can approximate any continuous equivariant function (see \cite{yarotsky2018universal, dym2020universality, bokman2022zz,kumagai2020universal}).

Equivariant models have been built for many different symmetry groups, such as translations \cite{lecun1989backpropagation}, gauge symmetries \cite{cohen2019gauge}, permutations \cite{maron2019invariant}, rotations/reflections \cite{cohen2016group, cohen2016steerable, wang2021incorporating, weiler2021general} or multiple symmetries \cite{thomas2018tensor, dym2020universality}.
There are many approaches to building equivariant models, such as using invariant theory \cite{2022blum-smith-villar-equivariant}, group convolutions \cite{cohen2016group}, canonicalization \cite{kaba2023equivariance}, or irreducible representations \cite{cohen2016steerable, jenner2021steerable, weiler2021general}.
Closest to our paper in both methods and applications are \cite{wang2021incorporating} and \cite{brandstetter2023clifford}, but they implement the symmetries with irreducible representations and Clifford algebras respectively.

Each equivariant method has some challenges.
Group convolutions require convolving over the group elements in addition to the spatial dimensions, which can be expensive for larger groups.
Irreducible representations are often calculated for continuous groups and require sampling to generate discrete (approximately) equivariant filters.
Also, decomposing higher-order tensors into irreducible representations and reconstructing them at the end is a nontrivial task.
The Clifford algebras can handle vectors and pseudovectors naturally, but they cannot handle all higher-order tensors because they are a quotient group of tensor algebra \cite[Ch. 14, Theorem 4.1]{lang02}.
In this work, we use geometric convolutions which will be naturally discrete, exactly equivariant, and able to handle any tensor order or parity.

See Appendix \ref{appendix_sec:related} for a more in-depth description of the mathematical details of the related work.

\section{Geometric Objects and Geometric Images}\label{sec:geometric_objects_and_images}

We define the geometric objects and geometric images that we use to generalize classical images in scientific contexts in \sectionname{} \ref{sec:geometric} and \sectionname{} \ref{sec:convolution}.
The main point is that the channels of geometric images, the components of vectors and tensors, are not independent.
There is a set of allowed operations on geometric objects that respect the structure and coordinate freedom of these objects.

\subsection{Geometric objects}\label{sec:geometric}

We start by fixing $d$, the dimension of the space, which will typically be 2 or 3.
The coordinate transformations will be given by the orthogonal group $O(d)$, the space of isometries of $\mathbb R^d$ that fix the origin. 
The geometric principle from classical physics \cite{mcp} states that geometric objects should be coordinate-free scalars, vectors, and tensors, or their negative-parity pseudo counterparts. 
By coordinate-free we mean that if $F$ is a function with geometric inputs, outputs, and parameters, then  $F(g\cdot v) = g\cdot F(v)$ for all objcts $v$ and all $g \in O(d)$. This is the mathematical concept of equivariance which we will explore further in Section \ref{sec:equivariant}.
This requires that the definitions of the geometric objects are inseparable from how $O(d)$ acts on them.
\begin{definition}[(pseudo-)scalars]\label{def:group_action_on_scalar}
    Let $s \in \mathbb{R}$ have an assigned parity $p \in \qty{-1,+1}$.
    Let $g \in O(d)$ and let $M(g)$ be the standard $d \times d$ matrix representation of $g$, i.e. $M(g^{-1}) = M(g)^{-1} = M(g)^\top$. 
    Then the action of $g$ on $s$, denoted $g \cdot s$, is defined as
    \begin{equation}
        g \vdot s = \det(M(g))^{\frac{1-p}{2}} \,s ~.
    \end{equation}
    When $p=+1$, $s$ is a \textit{scalar}, and $\det(M(g))^{\frac{1-p}{2}} = 1$ so the action is just the identity. When $p=-1$, $s$ is a \textit{pseudoscalar}, so $\det(M(g))^{\frac{1-p}{2}} = \det(M(g)) = \pm 1$ and there is a sign flip if $g$ involves an odd number of reflections.
\end{definition}

\begin{definition}[(pseudo-)vectors]\label{def:group_action_on_vector}
    Let $v \in \mathbb{R}^d$ be a \textit{vector} and let $v$ have parity $p\in\qty{-1,+1}$. 
    Let $g \in O(d)$ and let $M(g)$ be the standard matrix representation of $g$. 
    Then the action of $g$ on $v$, denoted $g \cdot v$, is defined as
    \begin{equation}\label{eq:group_action_on_vector}
        g \vdot v = \det(M(g))^{\frac{1-p}{2}} M(g) \, v ~,
    \end{equation}
    where parity $p$ has the same effect as on the scalars.
\end{definition}
We can now construct higher order tensors using the tensor (outer) product.

\begin{definition}[\tensors{k}{p}]\label{def:tensor}
    The space $\mathbb{R}^d$ equipped with the action $O(d)$ defined by \eqref{eq:group_action_on_vector} is the space of \textit{\tensors{1}{p}}. If we have $k$ \tensors{1}{p_i} denoted $v_i$, then $T:= v_1 \otimes \ldots \otimes v_k$ is a \textit{rank-1 \tensor{k}{p}}, where $p=\prod_{i=1}^k p_i$ and the action of $O(d)$ is defined as 
    \begin{equation}\label{eq:group_action_on_tensor}
        g \cdot \qty(v_1 \otimes \ldots \otimes v_k) = (g \cdot v_1) \otimes \ldots \otimes (g \cdot v_k) \,.
    \end{equation}
    Thus a tensor $T$ is an element of a vector space $(\mathbb{R}^d)^{\otimes k}$, which we denote $\mathcal{T}_{d,k,p}$.
    To get higher rank tensors, we can add tensors of the same order $k$ and parity $p$, and the action of $O(d)$ extends linearly. 
\end{definition}

Note that the parity $p$ is not an intrinsic quality of the components of a tensor.
For example, a vector and a pseudovector could be equal for a certain choice of coordinates, but they would behave differently under some coordinate transformations.
Also note the distinction between the \emph{order} $k$ of the \tensor{k}{p}, and the rank of the tensor. We could have a \tensor{2}{p} of rank 1, like those we use in Definition~\ref{def:tensor}.
We refer to the components of tensors with Einstein summation notation.

\begin{definition}[Einstein summation notation]\label{def:einstein_notation}
In \textit{Einstein summation notation}, the components of tensors are referred to by subscripts, e.g. $[a]_{ij}$ for the $i^{th},j^{th}$ component of \tensor{2}{p} $a$ where $i$ and $j$ are in the range $1,\ldots,d$.
In this paper, we assume that our tensor images have a Riemmannian metric of the identity matrix, so we do not need to distinguish between covariant and contravariant indices.
A subscript index may appear exactly once in a term, in which case we are taking the outer product, or exactly twice, in which case we are summing over (contracting) that index.
\end{definition}

This notation can be used to express a lot of familiar operations. 
For example, the dot product of vectors $a,b$ is written as $[a]_i [b]_i$.
The product of two \tensor{2}{p}s (represented as two $d\times d$ matrices $A$ and $B$) is written as
\begin{equation}
    [A\, B]_{i,j} = [A]_{i,k}\,[B]_{k,j} := \sum_{k=1}^d [A]_{i,k}\,[B]_{k,j}
\end{equation}
where the sum from 1 to $d$ on repeated index $k$ is implicit in the middle expression.
In summation notation, the group action of \eqref{eq:group_action_on_tensor} on \tensor{k}{p} $b$ is explicitly written
\begin{align}\label{def:tensor_rotation}
    [g\cdot b]_{i_1,\ldots, i_k} = \det(M(g))^{\frac{1-p}{2}}\,[b]_{j_1,\ldots,j_k}\,[M(g)]_{i_1,j_1}\cdots[M(g)]_{i_k,j_k}
\end{align} for all $g\in O(d)$.
For example, a \tensor{2}{+} has the transformation property $[g\cdot b]_{i,j} = [b]_{k,\ell}\,[M(g)]_{i,k}\,[M(g)]_{j,\ell}$, which, in normal matrix notation, is written as $g\cdot b = M(g)\,b\,M(g)^\top$. 
To make operations on general \tensor{k}{p} more concise, we adopt the following two defintions.

\begin{definition}[tensor product]\label{def:tensor_product}
    Let $a$ be a \tensor{k}{p} and let $b$ be a \tensor{k'}{p'}. Then the \textit{tensor product of a and b}, denoted $a \otimes b$, is the \tensor{\qty(k+k')}{p\,p'} whose $i_1,\ldots, i_{k+k'}$ components are defined as
    \begin{equation}
        [a \otimes b]_{i_1, \ldots, i_{k+k'}} = [a]_{i_1, \ldots, i_k} \, [b]_{i_{k+1}, \ldots, i_{k+k'}}
    \end{equation}
\end{definition}

\begin{definition}[$k$-contraction]\label{def:contraction}
    Let $a$ be a \tensor{(2k+k')}{p}, then the \textit{k-contraction} $\contract{k}{a}$ is a \tensor{k'}{p} defined as:
    \begin{equation}\label{eq:contraction}
        [\contract{k}{a}]_{j_1, \ldots, j_{k'}} = [a]_{i_1, \ldots, i_k, i_1, \ldots, i_k, j_1 \ldots, j_{k'}}
    \end{equation}
    In other words, we are contracting over indices $(1,k)$ to $(k+1,2k)$.
\end{definition}

It is helpful to think of the contraction as the generalization of the trace to higher order tensors, where we are summing over $k$ pairs of axes.
For a \tensor{2}{p} $a$, the tensor contraction $\contract{1}{a}$ is exactly the trace, a \tensor{0}{p}.
If $a$ is a \tensor{5}{p}, then the contraction $\contract{2}{a}$ is the \tensor{1}{p} given by:
\begin{equation}
    [\contract{2}{a}]_{j} = [a]_{i,\ell,i,\ell,j} = \sum_{i=1}^d \sum_{\ell=1}^d [a]_{i,\ell,i,\ell,j}
\end{equation}

We use the $k$-contraction to define a norm for tensors, which is equivalent to the $\ell_2$ norm on the vectorized tensor or the Frobenius norm for matrices extended to tensors.

\begin{definition}[$\ell_2$ tensor norm]\label{def:frobenius_norm}
    Let $a$ be a \tensor{k}{p}. 
    Then the \textit{$\ell_2$ tensor norm} $\norm{\cdot}_2: \mathcal{T}_{d,k,p} \to \mathcal{T}_{d,0,+}$ is defined as:
    \begin{equation}\label{eq:frobenius_norm}
        \norm{a}_2 = \sqrt{\contract{k}{a \otimes a}}
    \end{equation}
\end{definition}






\subsection{Geometric images and operations}\label{sec:convolution}

We will start by considering square (or cubic or hyper-cubic) images on a $d$-torus.
We work on a $d$-torus to simplify the mathematical results; all the definitions and operations will be applicable with minor adjustments to rectangular, non-toroidal arrays as well. 
We consider an image $A$ with $N$ equally spaced pixels in each dimension for $N^d$ pixels total.
Each pixel contains a \tensor{k}{p} where $k$ and $p$ are the same for each pixel.
We define the geometric images as follows.

\begin{definition}[geometric image] \label{def:geometric_image}
A \textit{geometric image} is a function $A:[N]^d \to \mathcal T_{d,k,p}$, where $[N]=\{0,1,\ldots, N-1\}$. The set of geometric images is denoted $\mathcal{A}_{N,d,k,p}$. We will also consider \tensor{k}{p} images on the $d$-torus, where $[N]^d$ is given the algebraic structure of $(\mathbb Z / N\mathbb Z)^d$. The pixel index of a geometric image, often $\bar\imath$, is naturally a \tensor{1}{+}.
\end{definition}

Just as the space of \tensors{k}{p} is a vector space, the space of geometric images is also a vector space. 
Thus they include vector addition and scalar multiplication. 
Additionally, for each tensor operation defined in Section \ref{sec:geometric}, we can define an analogous operation on geometric images that is performed pixel-wise.

We now turn to the first major contribution of this paper, the generalization of convolution to take geometric images as inputs and return geometric images as outputs.
The idea is that a geometric image of \tensor{k}{p}s is convolved with a geometric filter of \tensors{k'}{p'} to produce a geometric image that contains \tensor{(k+k')}{p\,p'}s, where each pixel is a sum of outer products. 
These \tensor{(k+k')}{p\,p'}s can then be contracted down to lower-order tensors using contractions (Definition~\ref{def:contraction}). 
Note that the sidelength $M$ of the geometric filter can be any positive odd number, but typically it will be much smaller than the sidelength $N$ of the geometric image.

\begin{definition}[geometric convolution]\label{def:convolution}
Given $A \in \mathcal{A}_{N,d,k,p}$ and $C \in \mathcal{A}_{M,d,k',p'}$ with $M=2m+1$ for some positive integer $m$, the \textit{geometric convolution} $A\ast C$ is a \tensor{(k+k')}{p\,p'} image such that
\begin{equation}\label{eq:convolution}
    (A\ast C)(\bar\imath) = \sum_{\bar a\in[-m, m]^d} A(\bar\imath - \bar a)\otimes C(\bar a + \bar m) ~,
\end{equation}
where $\bar\imath - \bar a$ is the translation of $\bar\imath$ by $\bar a$ on the $d$-torus pixel grid $(\mathbb Z / N\mathbb Z)^d$ and $\bar m$ is the vector of all $m$.
\end{definition}
This definition is on the torus to achieve exact translation equivariance, but in practice we can use zero padding or any other form of padding as the situation requires. 
Additionally, geometric convolution can be adapted to use longer strides, filter dilation, transposed convolution, or other convolution variations common in the literature.
See Figure \ref{fig:convolution_example_and_architecture}(a) for examples with a scalar and vector filter.
We can define max pooling using the $\ell_2$ norm of a tensor as follows:

\begin{definition}[$\text{max\,pool}_b$]\label{def:max_pool}
    Let $b$ be a positive integer and let $A \in \mathcal{A}_{N,d,k,p}$, where $b$ divides $N$. Then the function $\text{max\,pool}_b: \mathcal{A}_{N,d,k,p} \to \mathcal{A}_{N/b,d,k,p}$ is defined for each pixel index $\bar\imath \in [0,(N/b)-1]^d$:
    \begin{equation}\label{eq:max_pool}
        \text{max\,pool}_b(A)(\bar\imath) = A\qty(b\,\bar\imath + \argmax_{\bar a \in [0,b-1]^d}\norm{A(b\,\bar\imath + \bar a)}_2) 
    \end{equation}
\end{definition}

The convolution, contraction, index-permutation, and pooling operators above effectively span a large class of linear functions from geometric images to geometric images. 

\section{Functions of geometric images and equivariance}\label{sec:equivariant}

We start by defining equivariance and invariance for a general group $G$, and then we will describe the groups of interest and several theoretical results.

\begin{definition}[Equivariance of a geometric image function]
    Given a function on geometric images $f:\mathcal{A}_{N,d,k,p} \to \mathcal{A}_{N,d,k',p'}$, and a group $G$ equipped with actions on $\mathcal{A}_{N,d,k,p}$ and $\mathcal{A}_{N,d,k',p'}$, we say that $f$ is \textit{equivariant} to $G$ if for all $g \in G$ and $A \in \mathcal{A}_{N,d,k,p}$ we have:
    \begin{equation}
        f(g \cdot A) = g \cdot f(A)
    \end{equation}
    Likewise, $f$ is \textit{invariant} to $G$ if
    \begin{equation}
        f(g \cdot A) = f(A) ~.
    \end{equation}
    We also say a geometric image is $G$-isotropic if $g \cdot A = A$ for all $g \in G$.
\end{definition}

We first consider discrete translations on the $d$-torus pixel grid.
If $A$ is a \tensor{k}{p} image and $\tau \in (\mathbb{Z}/N\mathbb{Z})^d$ then the action $L_\tau A$ produces the \tensor{k}{p} image $(L_\tau A)(\bar\imath)=A(\bar\imath -\tau)$ where $\bar\imath$ is a pixel index and $\bar\imath - \tau$ is the translation of $\bar\imath$ by $\tau$ on the $d$-torus pixel grid. 
The fundamental property of convolution is that it is translation equivariant, and that every translation equivariant linear function can be expressed as a convolution with a fixed filter, as long as the filter can be set to be as large as the image \cite{kondor2018convolution}. 
The same property holds for geometric images.

\begin{proposition}\label{prop:convolution_iff_translation_invariant}
A function $f:\mathcal A_{N,d,k,p}\to \mathcal{A}_{N,d,k',p'}$ is a translation equivariant linear function if and only if it can be written as $\contract{k}{A \ast C}$ for some geometric filter $C \in \mathcal{A}_{M,d,k+k',p\,p'}$. When $N$ is odd, $M=N$, otherwise $M=N+1$.
\end{proposition}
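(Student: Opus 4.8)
The plan is to prove the two directions separately, treating the tensor indices as ``coefficients'' that ride along the classical argument for scalar images, since pixel translations act only on the spatial index $\bar\imath$ and leave the tensor slots untouched. For the ($\Leftarrow$) direction I would first verify that geometric convolution is linear in $A$ and translation equivariant by a direct reindexing of the defining sum \eqref{eq:convolution}: substituting $L_\tau A$ and shifting the summation variable $\bar a$ shows $(L_\tau A)\ast C = L_\tau(A\ast C)$. Contractions (Definition~\ref{def:contraction}) are linear and act identically on every pixel, so they commute with $L_\tau$; hence any composition of a convolution with $k$ contractions is a translation equivariant linear map. I would also check the order and parity bookkeeping: convolving a \tensor{k}{p} image with a \tensor{(2k+k')}{p'} filter yields order $3k+k'$ and parity $p\,p'$, and the $k$ contractions (each using the positive-parity Kronecker delta) drop the order to $k+k'$ while preserving parity, landing in $\mathcal{A}_{N,d,k+k',p\,p'}$ as required.

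For the ($\Rightarrow$) direction I would write an arbitrary linear $f$ in components as $[f(A)(\bar\imath)]_I = \sum_{\bar\jmath}\sum_J K_{I,J}(\bar\imath,\bar\jmath)\,[A(\bar\jmath)]_J$, where $I$ ranges over output multi-indices of length $k+k'$ and $J$ over input multi-indices of length $k$. Applying equivariance $f(L_\tau A)=L_\tau f(A)$ for every $\tau$ forces $K_{I,J}(\bar\imath,\bar\jmath)=\kappa_{I,J}(\bar\imath-\bar\jmath)$ to depend only on the pixel offset (the standard circulant argument, now run for each fixed tensor-index pair $(I,J)$). The structural observation that makes everything fit is that, for each fixed offset, $\kappa(\bar c)$ is a linear map from \tensor{k}{p}s to \tensor{(k+k')}{p p'}s, and such a map is itself a tensor of order $2k+k'$ and parity $p'$ whose application is exactly ``tensor with the argument, then contract the $k$ paired indices.'' I would therefore define the filter by $[C(\bar c)]_{J,I} := \kappa_{I,J}(\bar c - \bar m)$ (the parity label $p'$ is inert for translation equivariance and is fixed by the order/parity arithmetic above), and verify via the substitution $\bar a = \bar\imath - \bar\jmath$ that convolving with $C$ and then performing the $k$ contractions pairing each inherited $A$-index with the corresponding first-$k$ filter index reproduces $f$.

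The main obstacle is the filter-size claim. The offsets $\bar c$ live in $(\mathbb Z/N\mathbb Z)^d$, so $\kappa$ has $N^d$ independent spatial slots, whereas a geometric filter must have odd sidelength $M=2m+1$ with symmetric support $[-m,m]^d$. When $N$ is odd, taking $m=(N-1)/2$ gives $M=N$, and the symmetric range $[-m,m]$ is a complete set of residues mod $N$, so every offset is hit exactly once and the filter realizes $\kappa$ precisely. When $N$ is even no odd $M$ equals $N$, so I would take the next odd value $M=N+1$ (i.e. $m=N/2$); here the boundary offsets $-m$ and $+m$ coincide mod $N$, producing one redundant slot that I would absorb (for instance by setting the surplus filter entry to zero), which still lets the filter represent every element of $(\mathbb Z/N\mathbb Z)^d$. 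I would close by noting that a filter of strictly smaller sidelength spans only kernels supported on a proper subset of offsets, so $M$ must be at least $N$; being odd, the minimal—and the stated—value is $N$ for odd $N$ and $N+1$ for even $N$.
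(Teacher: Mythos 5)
Your proof is correct, but it takes a genuinely different route from the paper's on the forward direction. The paper argues by dimension counting: after establishing, as you do, that every convolution-plus-contractions map is linear and translation equivariant, it observes that a translation equivariant linear map is determined by its output at a single pixel, so the space $\mathcal{F}$ of such maps has dimension at most $N^d d^{2k+k'}$; it then shows the parameterization $C \mapsto T_M(\,\cdot \ast C, (1,k+1),\ldots,(k,2k))$ is injective---by exhibiting, for any nonzero filter, a one-hot input image on which the output has a nonzero component---so the image $\mathcal{G}$ of this parameterization has dimension exactly $N^d d^{2k+k'}$, and $\mathcal{G}\subseteq\mathcal{F}$ forces $\mathcal{F}=\mathcal{G}$. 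You instead run the classical circulant argument directly: translation equivariance forces the kernel $K_{I,J}(\bar\imath,\bar\jmath)$ to depend only on the offset $\bar\imath-\bar\jmath$, and the canonical identification of linear maps $(\mathbb{R}^d)^{\otimes k}\to(\mathbb{R}^d)^{\otimes(k+k')}$ with tensors in $(\mathbb{R}^d)^{\otimes(2k+k')}$ (via the standard inner product) turns each per-offset map into a filter pixel whose application is outer product followed by the $k$ paired contractions---precisely the pattern $(1,k+1),\ldots,(k,2k)$ used in the paper. Your construction buys explicitness and makes surjectivity immediate, whereas the paper's count buys injectivity of the filter parameterization as a byproduct (a fact your argument neither needs nor establishes); both are legitimate proofs of the stated equivalence. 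The even-$N$ bookkeeping is also the same device in both: the paper's Lemma~\ref{lemma:even_filter} folds each equivalence class of boundary offsets onto the all-$(-m)$ representative and zeroes the others, which is your ``set the surplus filter entry to zero''; the only imprecision on your side is calling this ``one redundant slot,'' since for $d>1$ an offset with $r$ coordinates equal to $\pm m$ has $2^r$ representatives in $[-m,m]^d$, though your zeroing mechanism handles this without modification.
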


See Appendix \ref{proof:convolution_iff_translation_invariant} for the proof.
In addition to translation symmetries, we want to consider other natural symmetries occurring in the application domains where vectors and tensors arise. 
Ideally we would like to apply continuous rotations to the images, but the discretized nature of images makes this challenging. 
To obtain exact results on images, we focus on discrete rotations.
For 2D images this is the familiar dihedral group $D_4$ of rotations of 90 degrees and reflections, and in the general-D case it is the hyperoctahedral group $B_d$, the Euclidean symmetries of the $d$-dimensional hypercube.
The notation $B_d$ is standard nomenclature coming from the classification theorem for finite irreducible reflection groups \cite{humphreys1990reflection}.
Because the groups $B_d$ are subgroups of $O(d)$, all determinants of the matrix representations of the group elements are either $+1$ or $-1$, and the matrix representation $M(g^{-1})$ of the inverse $g^{-1}$ of group element $g$ is the transpose of the matrix representation $M(g)$ of group element $g$.


\begin{definition}[Action of $B_d$ on \tensor{k}{p}s]\label{def:action_bd_on_tensor}
    Given a \tensor{k}{p} $b$, the action of $g \in B_d$ on $b$, denoted $g \cdot b$, is the restriction of the action in Definition \ref{eq:group_action_on_tensor} to $B_d$ which is a subgroup of $O(d)$.
\end{definition}

\begin{definition}[Action of $B_d$ on \tensor{k}{p} images]\label{def:action_on_tensor_image}
Given $A \in \mathcal{A}_{N,d,k,p}$ on the $d$-torus and a group element $g\in B_d$, the action $g\cdot A$ produces a \tensor{k}{p} image on the $d$-torus such that
\begin{equation}
    (g\cdot A)(\bar\imath) = g\cdot A({g^{-1}\cdot \bar\imath}) ~.
\end{equation}
Since $\bar \imath$ is a \tensor{1}{+}, the action $g^{-1} \cdot \bar \imath$ is performed by centering $\bar\imath$, applying the operator, then un-centering the pixel index: 
$$
    g^{-1} \cdot \bar \imath = \qty(M(g^{-1})(\bar \imath - \bar m))+\bar m
$$ 
where $\bar m$ is the $d$-length \tensor{1}{+} $\qty[\frac{N-1}{2}, \ldots, \frac{N-1}{2}]^\top$. If the pixel index is already centered, such as $\bar a \in [-m,m]^d$, then we skip the centering and un-centering.
\end{definition}

It might be a bit surprising that the group element $g^{-1}$ appears in the definition of the action of the group on images.
One way to think about it is that the pixels in the transformed image are ``looked up'' or ``read out'' from the pixels in the original untransformed image.
The pixel locations in the original image are found by going back, or inverting the transformation.

\begin{definition}[The group $G_{N,d}$, and its action on \tensor{k}{p} images]\label{def:GdN} $G_{N,d}$ is the group generated by the elements of $B_d$ and the discrete translations on the $N^d$-pixel lattice on the $d$-torus.
\end{definition}

\begin{remark}
    We view the $d$-torus as the quotient of the $d$-hypercube obtained by identifying opposite faces. 
    The torus obtains the structure of a flat (i.e., zero curvature) Riemannian manifold this way. Because the symmetries $B_d$ of the hypercube preserve pairs of opposite faces, they act in a well-defined way on this quotient, so we can also view $B_d$ as a group of isometries of the torus. 
    We choose the common fixed point of the elements of $B_d$ as the origin for the sake of identifying the $N^d$ pixel lattice with the group $T_{N,d} \cong (\mathbb{Z}/N\mathbb{Z})^d$ of discrete translations of this lattice; then the action of $B_d$ on the torus induces an action of $B_d$ on $T_{N,d}$ by automorphisms. 
    The group $G_{N,d}$ is the semidirect product $T_{N,d}\rtimes B_d$ with respect to this action. Thus there is a canonical group homomorphism $G_{N,d} \rightarrow B_d$ with kernel $T_{N,d}$. 
    In concrete terms, every element of $G_{N,d}$ can be written in the form $\tau \circ b$, where $b\in B_d$ and $\tau \in T_{N,d}$. 
    Then the canonical map $G_{N,d}\rightarrow B_d$ sends $\tau \circ b$ to $b$. 
\end{remark}

Now that we have defined the group that we are working with, we can specify how to build convolution functions that are equivariant to $G_{N,d}$. 
The following theorem generalizes the Cohen and Welling paper \cite{cohen2016group} for geometric convolutions.

\begin{theorem}\label{theorem:linear_equiv_characterization}
    A function $f:\mathcal{A}_{N,d,k,p} \to \mathcal{A}_{N,d,k',p'}$ is linear and $G_{N,d}$-equivariant if and only if it can be written as $\contract{k}{A * C}$ for some $B_d$-isotropic $C \in \mathcal{A}_{M,d,k+k',p\,p'}$, where $M=N$ if $N$ is even and $M=N+1$ otherwise.
\end{theorem}

The proof of this theorem is given in Appendix \ref{appendix:proofs}.
Theorem \ref{theorem:linear_equiv_characterization} provides the explicit requirements for linear layers in our equivariant \textsl{GeometricImageNet}. 
All we need are the $B_d$-isotropic \tensor{(k+k')}{p\,p'} filters which are straightforward to find using group averaging.

\begin{figure}
    \begin{centering}
        \begin{minipage}{0.49\linewidth}
            \includegraphics[width=\linewidth]{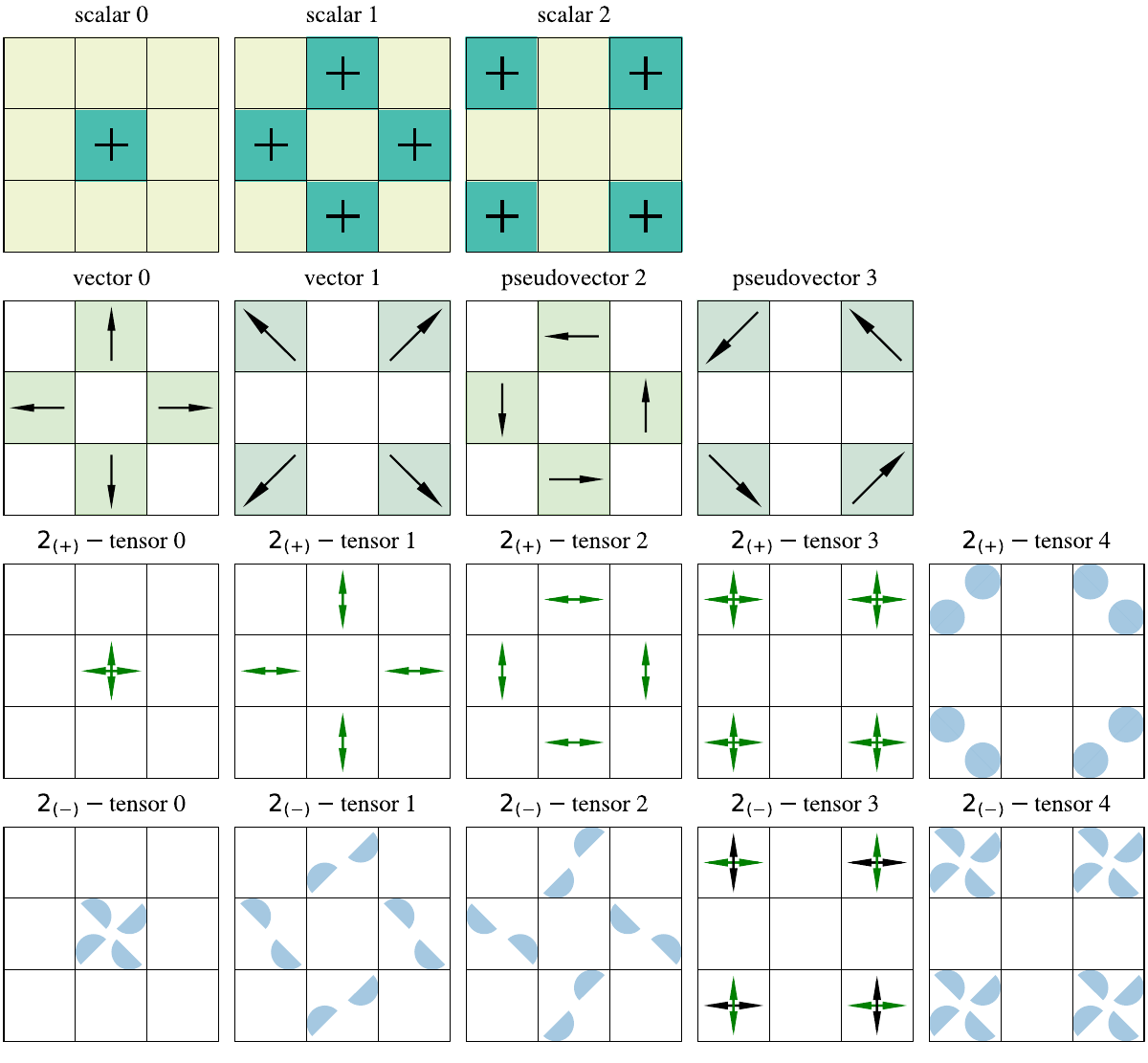}
        \end{minipage}
        \begin{minipage}{0.49\linewidth}
            \includegraphics[width=\linewidth]{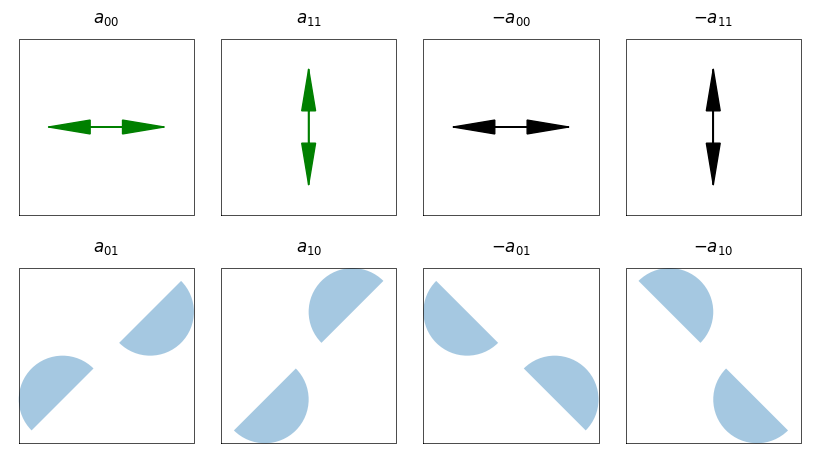}
        \end{minipage}
        \begin{minipage}{0.49\linewidth}
            \centering
            (a)
        \end{minipage}
        \begin{minipage}{0.49\linewidth}
            \centering
            (b)
        \end{minipage}
    \end{centering}
    \caption{
    (a) All the filters for $d=2$, $M=3$, $k \in \qty{0,1,2}$.
    Where there is no symbol in the box the value is zero.
    There are no $B_d$-isotropic pseudoscalar filters at $d=2, M=3$.
    (b) Each signed component in the \tensor{2}{p} has a particular icon, with the positive diagonal elements represented by the green double arrows, the negative diagonal elements represented by the black double arrows, and the off diagonal elements represented by the petals.
    Each element rotates in the obvious way, and \tensors{2}{+} reflect in the obvious way as well.
    However, reflections on negative parity diagonal elements flip the sign (color) of the double arrows and have no effect on the petals other than changing their pixel location.
    }
    \label{fig:filters23}
\end{figure}

 
\section{\textsl{GeometricImageNet} Architectures}\label{sec:architectures}

Per Theorem \ref{theorem:linear_equiv_characterization}, we construct linear $G_{N,d}$-equivariant layers using $B_d$-isotropic filters. 
A complete basis of $B_d$-isotropic \tensor{(k+k')}{p\,p'} filters can be found by group averaging.
First we get the standard basis of $\mathbb{R}^{M^d \times d^{(k+k')}}$ and reshape them into filters $C_i$ with sidelength $M$ and assigned parity $p\,p'$.
Next we apply the group averaging:
\begin{equation}\label{eq:group_averaging}
   \widetilde{C}_i = \frac{1}{\qty|B_d|}\sum_{g \in B_d} g \cdot C_i ~,
\end{equation}
where $|B_d|$ is the number of group elements.
This will likely result in a linearly dependent set of filters, so we perform SVD to reduce to a single set of unique filters.
The filters are then normalized so that non-zero tensors have unit norm, and the $k=1$ filters are also reoriented such that non-zero divergences were set to be positive, and non-zero curls were set to be counter-clockwise.
See \figurename~\ref{fig:filters23} for the $B_d$-isotropic convolutional filters in $d=2$ dimensions for filters of sidelength $M=3$. 
Next, we use these $B_d$-isotropic filters to construct linear $G_{N,d}$-equivariant layers.

The linear layers take an input collection of geometric images $\qty{(k_i,p_i)}_{i=1}^{W_{\text{in}}}$ with $c_i$ channels and the desired output tensor orders and parities $\qty{(k_j,p_j)}_{j=1}^{W_{\text{out}}}$ with $c_j$ channels and computes all the convolutions\footnote{The geometric convolution package is implemented in JAX, which in turn uses TensorFlow XLA under the hood. This means that convolution is actually cross-correlation, in line with how the term in used in machine learning papers. For our purposes this results in at most a coordinate transformation in the filters.} and contractions to map between those two sets. 
Following Theorem \ref{theorem:linear_equiv_characterization}, there are $\ell = 1,\ldots,c_j$ functions $\sum_{i=1}^{W_\text{in}}\sum_{z=1}^{c_i} \contract{k_i}{A_{i,z}*C_{\ell,i,z}}$ for each desired output tensor order and parity.
Per the theorem, these convolution filters $C_{\ell,i,z}$ must be $B_d$-isotropic to guarantee that this layer is $G_{N,d}$-equivariant.
Each $B_d$-isotropic filter is a parameterized linear combination of the $B_d$-isotropic basis we found by group averaging.
However, using filters as large as the input image is impractical in most cases, so we use deeper networks of $3 \times 3$ or $5 \times 5$ filters, as is commonly done in CNNs \cite{vgg}.

Nonlinear layers present a challenge because the typical pointwise nonlinear functions such as ReLU or tanh break equivariance when applied to the individual components of a tensor.
Properly building $O(d)$-equivariant nonlinear functions is a challenging and active area of research; for a larger exploration, see \cite{xu2022unified} and references therein.
For this model, we extend the Vector Neuron nonlinearity \cite{deng2021vectorneuronsgeneralframework} for any tensor order and parity, see Appendix \ref{appendix:nonlinear} for a precise definition.
See Figure \ref{fig:convolution_example_and_architecture}(b) for an example of a typical architecture interlacing linear and nonlinear layers.

The final layer types we will use in our model are LayerNorm \cite{ba2016layernormalization} and max pool.
To make an equivariant version of LayerNorm, we follow the strategy of vector whitening used in \cite{brandstetter2023clifford}, based on a similar strategy developed for neural networks with complex values \cite{trabelsi2018deepcomplexnetworks}.
Max pooling layers use the $\ell_2$ tensor norm to determine the max tensor for each channel of each input image.

\begin{hoggfigure}
    \begin{center}
        \begin{minipage}{0.5\linewidth}
            \begin{minipage}{0.3\linewidth}
                \centering
                \includegraphics[width=\linewidth]{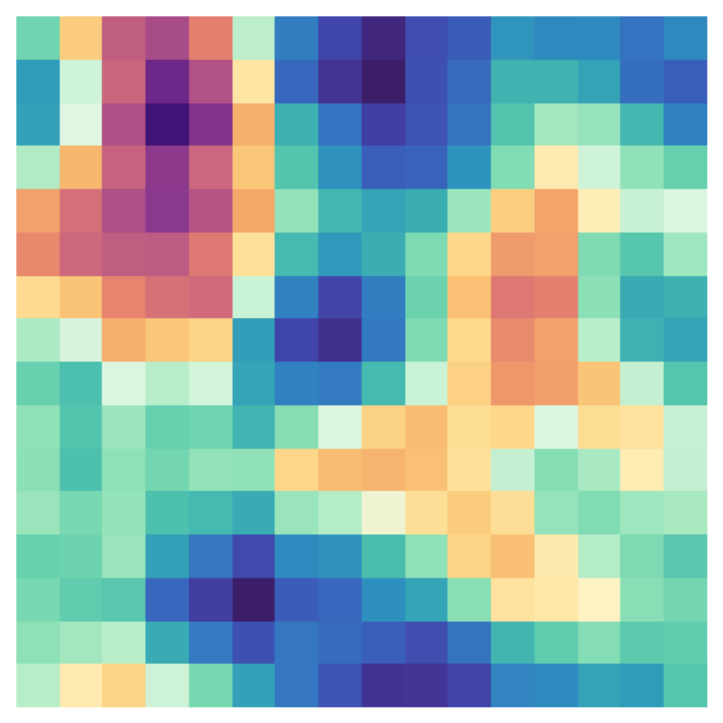}
            \end{minipage}
            \begin{minipage}{0.05\linewidth}
                \centering
                \Large
                *
            \end{minipage}
            \begin{minipage}{0.19\linewidth}
                \centering
                \includegraphics[width=\linewidth]{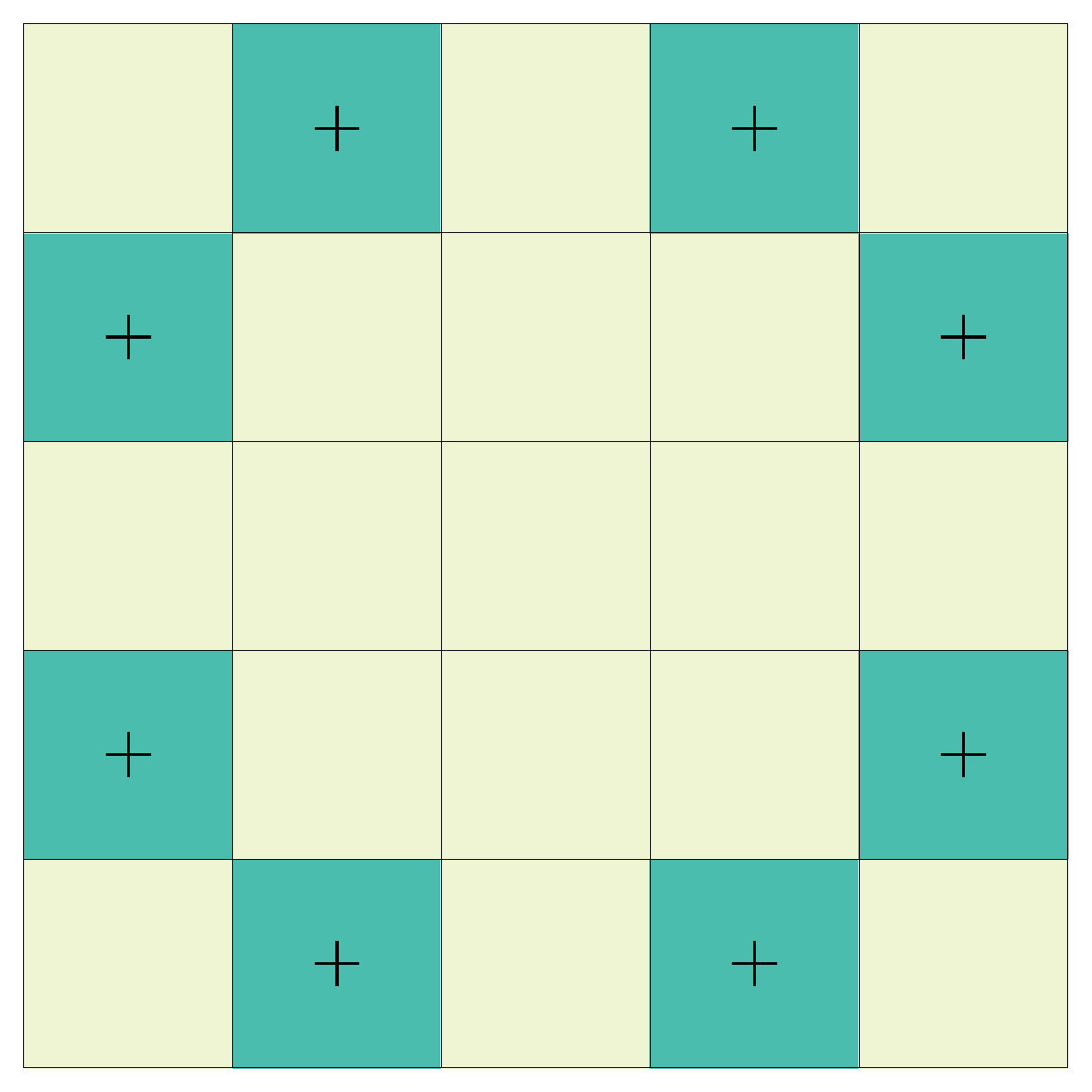}
            \end{minipage}
            \begin{minipage}{0.05\linewidth}
                \centering
                \Large
                =
            \end{minipage}
            \begin{minipage}{0.3\linewidth}
                \centering
                \includegraphics[width=\linewidth]{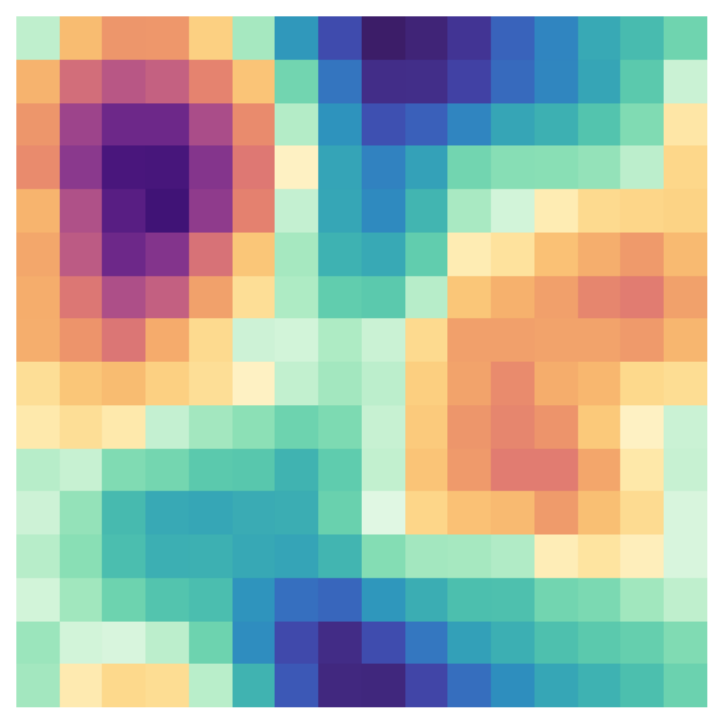}
            \end{minipage}
            
            \begin{minipage}{0.3\linewidth}
                \centering
                \includegraphics[width=\linewidth]{pics/scalar_img.pdf}
            \end{minipage}
            \begin{minipage}{0.05\linewidth}
                \centering
                \Large
                *
            \end{minipage}
            \begin{minipage}{0.19\linewidth}
                \centering
                \includegraphics[width=\linewidth]{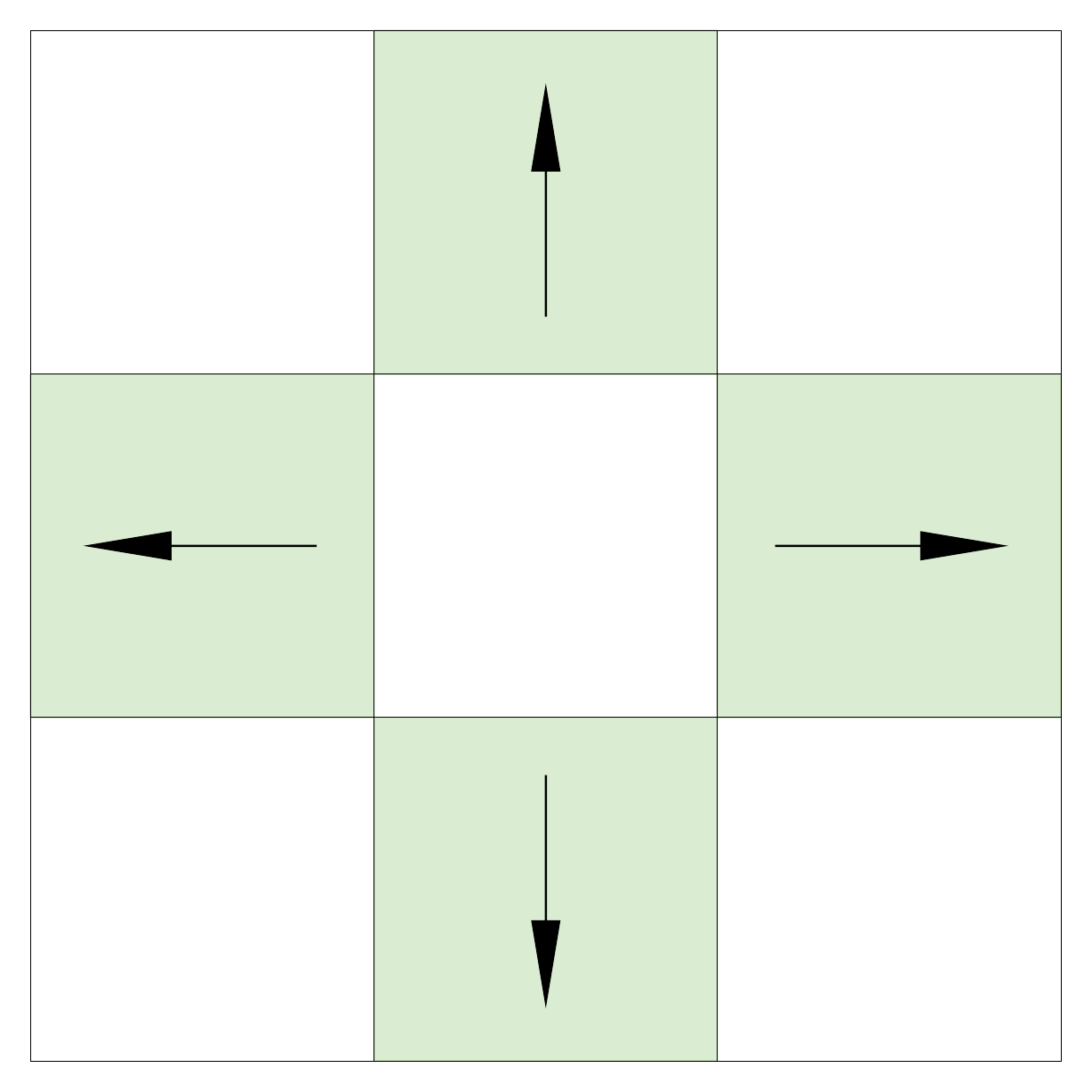}
            \end{minipage}
            \begin{minipage}{0.05\linewidth}
                \centering
                \Large
                =
            \end{minipage}
            \begin{minipage}{0.3\linewidth}
                \centering
                \includegraphics[width=\linewidth]{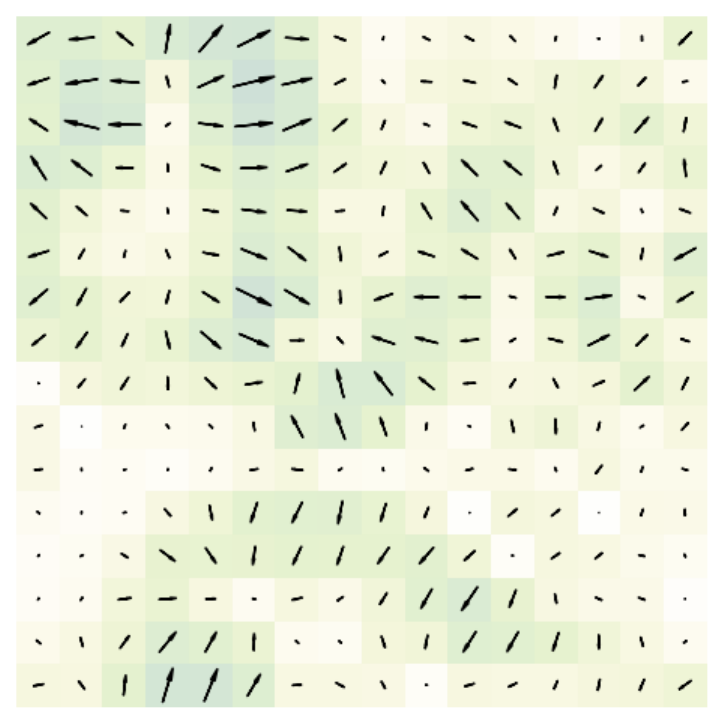}
            \end{minipage}
        \end{minipage}
        \begin{minipage}{0.49\linewidth}
            \centering
            \includegraphics[width=\linewidth]{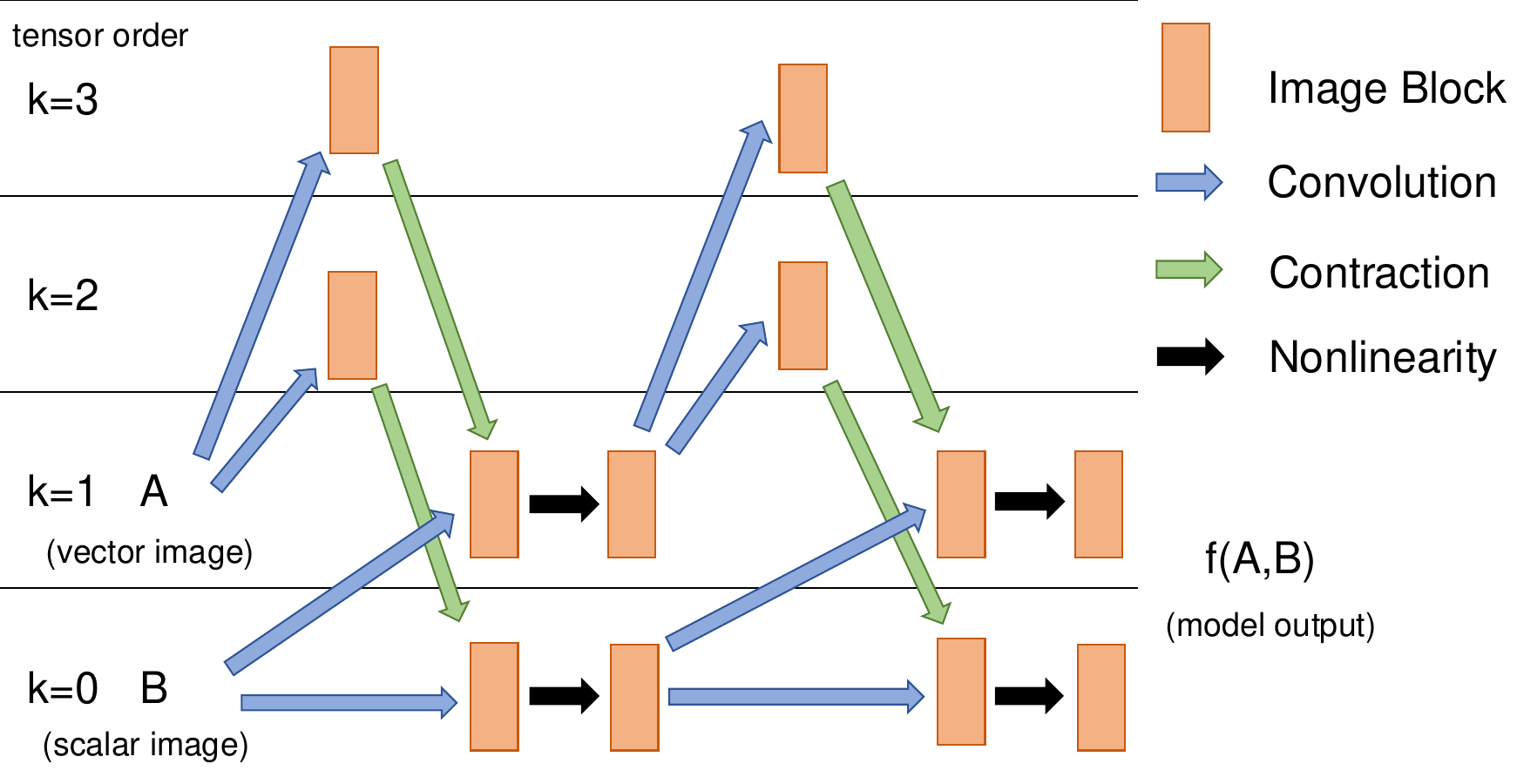}
        \end{minipage}
        \begin{minipage}{0.5\linewidth}
            \centering
            (a)
        \end{minipage}
        \begin{minipage}{0.49\linewidth}
            \centering
            (b)
        \end{minipage}
    \end{center}
    \caption{(a) Convolution of a scalar image with a scalar and vector filter. (b) Example architecture taking a vector image and scalar image as input and output. Linear layers are shown by the blue convolution arrows followed by green contraction arrows. The black arrows represent nonlinearities. The orange blocks represent multiple channels of images at that tensor order.}
    \label{fig:convolution_example_and_architecture}
\end{hoggfigure}


\section{Numerical Experiments}\label{sec:experiments}

We will conduct numerical experiments on 2D compressible Navier-Stokes simulation data from the excellent PDEBench data set \cite{takamoto2024pdebench}.
This data consists of velocity (vector) fields, density (scalar) fields, and pressure (scalar) fields with periodic boundary conditions discretized into $128 \times 128$ images on the torus.
The simulations are saved at $21$ time points which are a subset of the integrator timesteps.
We use $128$ simulation trajectories with random initial conditions as training data and another $128$ trajectories as test data.
We use data generated with two distinct set of parameters: Mach number $M=0.1$, shear viscosity $\eta=0.01$, and bulk viscosity $\zeta=0.01$ and $M=1.0,\eta=0.1,\zeta=0.1$.
The two sets of parameters are used to train entirely different models and tested separately.

The model task is take as input the velocity, density, and pressure fields at a certain time point, and predict what those fields will be at the next time point.
However, since this is too difficult, we actually give the model the four previous time points.
Thus we can turn the $128$ training trajectories into $2,176$ training data points because each trajectory has $17$ overlapping sections of four input steps and one output step.
We train a Dilated ResNet \cite{stachenfeld2022learned}, a ResNet \cite{he2015resnet}, and a UNet \cite{ronneberger2015unet} with and without LayerNorm \cite{ba2016layernormalization} and equivariant versions of each of those models.
We train with the sum of the MSE loss of each field of a single step, but at test time we are also interested in the performance of autoregressively rolling out the model over 15 time steps.
The baseline models and training setup generally follow those described in \cite{gupta2022pdearena}, and additional data, model, and training details are in Appendix \ref{appendix:experiments}.

The numerical results are given in Table \ref{tab:cfd_results}. 
In all cases of the 1-step loss and almost all cases of the 15 step rollout loss, the equivariant models outperform the non-equivariant versions.
In Figure \ref{fig:cfd_results}, we can see with more granularity the test performance for each rollout step.
In the most drastic example, the rollout error for the Dilated ResNet explodes, while the equivariant Dilated ResNet is stable and accurate over all 15 steps.
In \cite{stachenfeld2022learned}, the authors combat this issue by adding a small amount of Gaussian noise during training; we instead achieve stability in a physically-motivated way by enforcing $O(d)$-equivariance.
The equivariance also helps with parameter efficiency; we chose channel depth so that the equivariant model was comparable to the baseline model in parameter count (Table \ref{tab:cfd_results}), however we could also have aimed for the same accuracy to get a large reduction in the number of parameters. 
Code to reproduce all these experiments and build your own GI-Net is available at \url{https://github.com/WilsonGregory/GeometricConvolutions}. The code is built in Python using JAX \cite{jax2018github}.

\begin{hoggfigure}
    \includegraphics[width=\textwidth]{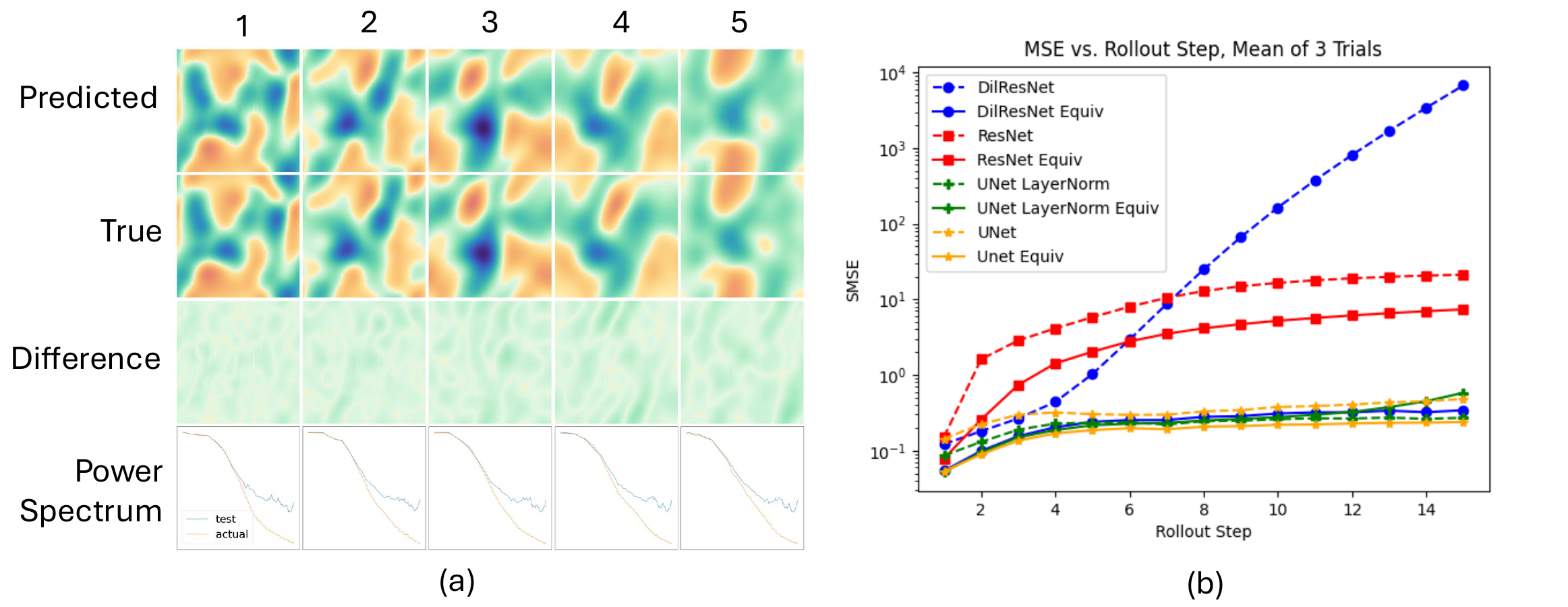}
    \caption{(a) Five steps of M0.1 rollout using the best performing model, the equivariant UNet without LayerNorm. 
    The x-component of the velocity is plotted.
    (b) Comparison of test performance over a 15 step rollout on the M0.1 data set.
    The SMSE is shown for \textit{each} step, rather than a cumulative loss.
    }
    \label{fig:cfd_results}
\end{hoggfigure}

\begin{table}[h]
    \begin{tabular}{c|cc|cc}
        model & M0.1 1-step & M0.1 rollout & M1.0 1-step & M1.0 rollout \\
        \toprule
        DilResNet & 0.040  & 13318.773 $\pm$ 18824.855  & 0.005  & 9.574 $\pm$ 9.608 \\
        DilResNet Equiv & \textbf{0.018 }& \textbf{3.770 $\pm$ 0.090 }& \textbf{0.001 }& \textbf{0.153 $\pm$ 0.023}\\
        \midrule
        ResNet & 0.039  & 175.736 $\pm$ 17.846  & 0.009  & \textbf{0.835 $\pm$ 0.097}\\
        ResNet Equiv & \textbf{0.024 $\pm$ 0.001 }& \textbf{57.508 $\pm$ 9.157 }& \textbf{0.003 }& 2.943 $\pm$ 0.992 \\
        \midrule
        UNet LayerNorm & 0.027  & 3.414 $\pm$ 0.217 & 0.009 $\pm$ 0.001 & 1.067 $\pm$ 0.190 \\
        UNet LayerNorm Equiv & \textbf{0.018 $\pm$ 0.002 }& \textbf{3.971 $\pm$ 1.158 }& \textbf{0.001 }& \textbf{0.136 $\pm$ 0.047}\\
        \midrule
        UNet & 0.047 $\pm$ 0.001 & 5.086 $\pm$ 0.105 & 0.012 $\pm$ 0.002 & 2.074 $\pm$ 0.066 \\
        Unet Equiv & \textbf{0.018 }& \textbf{2.813 $\pm$ 0.257 }& \textbf{0.001 }& \textbf{0.124 $\pm$ 0.018}\\
        \bottomrule
    \end{tabular}
    \caption{ Loss values for each model, averaged over three trials.
    All losses are the sum of the mean squared error losses over the channels: density, pressure, and velocity.
    The rollout loss is the sum of the error over 15 steps. 
    The std $\pm 0.xxx$ is provided if its at least $0.001$.}
    \label{tab:cfd_results}
\end{table}


\section{Discussion}\label{sec:discussion}

This paper presents geometric convolutions which can easily adapt any CNN architecture to be equivariant for images of vectors or tensors.
This makes the model ideal for tackling many problems in the natural sciences in a principled way.
We see in 2D compressible Navier-Stokes simulations that we achieve better accuracy and more stable rollouts than non-equivariant baseline models.

One limitation of this work is that we use discrete symmetries instead of continuous symmetries. 
We expect invariance and equivariance with respect to rotations other than 90 degrees to appear in nature, but the images that we work with are always going to be $d$-cube grids of points.
Thus, we use the group $G_{N,d}$ to avoid interpolating rotated images and working with approximate equivariances.
This simplifies the mathematical results, and we see empirically that we still have the benefits of rotational equivariance. 
However, there are other possible image representations that might create continuous concepts of images. 
For example, if the data is on the surface of a sphere, it could be represented with tensor spherical harmonics, and it could be subject to transformations by a continuous rotation group.

Another limitation of this work is that we do not compare our method to existing state-of-the-art numerical integrator methods.
Surrogate ML models for fluid dynamics simulations have generally suffered from comparisons to weak baselines that overstate the accuracy or efficiency of the surrogate model \cite{mcgreivy_hakim2024}.
In this work, we only claim to improve upon existing vanilla CNN models, and we leave further comparisons to future work.

There are many other future directions that could be explored. 
Further research is required to understand how and why the equivariance helps.
One interesting observation of Figure \ref{fig:cfd_results} (a) is that the power spectrum for the equivariant model output is still quite different from the ground truth at higher frequencies.
It may be that equivariance is advantageous at certain scales and not at others.


\paragraph{Acknowlegements:}
It is a pleasure to thank
Roger Blandford (Stanford),
Drummond Fielding (Flatiron),
Leslie Greengard (Flatiron),
Ningyuan (Teresa) Huang (JHU),
Kate Storey-Fisher (NYU),
and the Astronomical Data Group at the Flatiron Institute for valuable discussions and input.
This project made use of Python~3 \cite{python3}, numpy \cite{numpy}, matplotlib \cite{matplotlib}, and cmastro \cite{cmastro}.
All the code used for making the data and figures in this paper is available at \url{https://github.com/WilsonGregory/GeometricConvolutions}.

\paragraph{Funding:}
WG was supported by an Amazon AI2AI Faculty Research Award.
BBS was supported by ONR N00014-22-1-2126.
MTA was supported by 
H2020-MSCA-RISE-2017, Project 777822, and from
Grant PID2019-105599GB-I00, Ministerio de Ciencia, Innovaci\'on y Universidades, Spain.
SV was partly supported by the NSF–Simons
Research Collaboration on the Mathematical and Scientific Foundations of Deep Learning (MoDL) (NSF DMS
2031985), NSF CISE 2212457, ONR N00014-22-1-2126, NSF CAREER 2339682, and an Amazon AI2AI Faculty Research Award.
\bibliographystyle{plain}
\bibliography{ref}

\appendix

\section{Proofs}\label{appendix:proofs}

\subsection{Proof of Theorem \ref{theorem:linear_equiv_characterization}}\label{proof:theorem_linear_equiv}

Before proving Theorem \ref{theorem:linear_equiv_characterization}, we state and prove a number of helpful properties, propositions and lemmas.

\begin{properties}
    Let $A,B \in \mathcal{A}_{N,d,k,p}$, let $C,S \in \mathcal{A}_{M,d,k',p'}$, let $D,Q \in \mathcal{A}_{N,d,2k+k',p}$, let $\tau \in (\mathbb{Z}/N\mathbb{Z})^d$ be a translation on the $d$-torus, let $\alpha,\beta \in \mathbb{R}$, and let $g \in G_{N,d}$.
    Then the following properties hold.
    \begin{enumerate}
        \item Convolutions are translation equivariant:
        \begin{equation} \label{prop:conv_is_translation_equiv}
            (L_\tau A) * C = L_\tau(A * C) ~.
        \end{equation}
        \item Convolutions are linear in the geometric image:
        \begin{equation} \label{prop:linear_convolution}
            (\alpha A + \beta B) * C = \alpha(A * C) + \beta (B * C) ~.
        \end{equation}
        Convolutions are also linear in the filters:
        \begin{equation} \label{prop:linear_convolution_filters}
            A * (\alpha C + \beta S) = \alpha(A * C) + \beta (A * S) ~.
        \end{equation}
        \item The $k$-contraction is $G_{N,d}$-equivariant:
        \begin{equation} \label{prop:equivariant_contractions}
            g \cdot \contract{k}{D} = \contract{k}{g \cdot D} ~.
        \end{equation}
        \item The $k$-contraction is a linear function:
        \begin{equation}\label{prop:linear_contractions}
            \contract{k}{\alpha D + \beta Q} = \alpha\, \contract{k}{D} + \beta\, \contract{k}{Q} ~.
        \end{equation}
    \end{enumerate}
\end{properties}

\begin{proof}
    First we will prove (\ref{prop:conv_is_translation_equiv}). Let $A,C$, and $\tau$ be as above and let $\bar\imath$ be a pixel index of $L_\tau A * C$. Then:
    \begin{align*}
        (L_\tau A\ast C)(\bar\imath) &= \sum_{\bar a\in[-m, m]^d} (L_\tau A)(\bar\imath - \bar a)\otimes C(\bar a + \bar m) \\
        &= \sum_{\bar a\in[-m, m]^d} A(\bar\imath - \bar a - \tau)\otimes C(\bar a + \bar m) \\
        &= \sum_{\bar a\in[-m, m]^d} A((\bar\imath-\tau) - \bar a)\otimes C(\bar a + \bar m) \\
        &= (A\ast C)(\bar \imath - \tau) \\
        &= L_\tau (A\ast C)(\bar \imath)
    \end{align*}
    Now we will prove (\ref{prop:linear_convolution}). Let $A,B,C,\alpha,$ and $\beta$ be as above and let $\bar\imath$ be a pixel index of $(\alpha A + \beta B) * C$. Then:
    \begin{align*}
        ((\alpha A + \beta B) * C)(\bar\imath) 
        &= \sum_{\bar a \in [-m,m]^d} (\alpha A + \beta B)(\bar \imath - \bar a) \otimes C(\bar a + \bar m) \\
        &= \sum_{\bar a \in [-m,m]^d} (\alpha A(\bar \imath - \bar a) + \beta B(\bar \imath - \bar a)) \otimes C(\bar a + \bar m) \\
        &= \sum_{\bar a \in [-m,m]^d} \alpha A(\bar \imath - \bar a) \otimes C(\bar a + \bar m)  + \beta B(\bar \imath - \bar a) \otimes C(\bar a + \bar m) \\
        &= \alpha \sum_{\bar a \in [-m,m]^d} A(\bar \imath - \bar a) \otimes C(\bar a + \bar m)  +  \beta \sum_{\bar a \in [-m,m]^d} B(\bar \imath - \bar a) \otimes C(\bar a + \bar m) \\
        &= \alpha (A * C)(\bar \imath) + \beta (B * C)(\bar\imath)
    \end{align*}
    Now we will prove \eqref{prop:linear_convolution_filters}. Let $A,C,S,\alpha,$ and $\beta$ be as above and let $\bar\imath$ be a pixel index. Then:
    \begin{align*}
        \qty(A * (\alpha C + \beta S))(\bar\imath) &= 
        \sum_{\bar a \in [-m,m]^d} A(\bar\imath - \bar a) \otimes (\alpha C + \beta S)(\bar a + \bar m) \\
        &= \sum_{\bar a \in [-m,m]^d} A(\bar\imath - \bar a) \otimes \alpha C(\bar a + \bar m) + A(\bar\imath - \bar a) \otimes  \beta S(\bar a + \bar m) \\
        &= \alpha \sum_{\bar a \in [-m,m]^d} A(\bar\imath - \bar a) \otimes C(\bar a + \bar m) + \beta \sum_{\bar a \in [-m,m]^d} A(\bar\imath - \bar a) \otimes S(\bar a + \bar m) \\
        &= \alpha (A*C)(\bar\imath) + \beta (A*S)(\bar\imath)
    \end{align*}
    
    Next we will prove (\ref{prop:equivariant_contractions}). Let $D$ be defined as above and let $\bar\imath$ be a pixel of $D$.
    First we will show that contractions are equivariant to translations. Let $\tau \in (\mathbb{Z}/N\mathbb{Z})^d$. Then 
    \begin{equation}
        \contract{k}{L_\tau D}(\bar \imath) = \contract{k}{(L_\tau D)(\bar\imath)} = \contract{k}{D(\bar\imath - \tau)} = \contract{k}{D}(\bar\imath - \tau) = L_\tau \contract{k}{D}(\bar \imath) ~.
    \end{equation}
    Thus contractions are equivariant to translations. Now we will show that contractions are equivariant to $B_d$. Let $g \in B_d$, and denote $D(g^{-1} \cdot \bar\imath) = a$. Then by equation (\ref{def:tensor_rotation}) we have:
    \begin{align*}
        \contract{k}{g \cdot D}(\bar\imath) &= \contract{k}{(g \cdot D)(\bar\imath)} \\
        &= \contract{k}{g \cdot D(g^{-1} \cdot \bar\imath)} \\
        &= \contract{k}{g \cdot a} \\
        &= [g \cdot a]_{i_1, \ldots, i_k, i_1, \ldots, i_k, i_{2k+1}, \ldots, i_{2k+k'}} \\
        &= [a]_{j_1, \ldots, j_{2k+k'}} \prod_{q \in [k]} [M(g)]_{i_q, j_q} [M(g)]_{i_q, j_{q+k}} \prod_{q \in [2k+1,2k+k']} [M(g)]_{i_q, j_q} \\
        \overset{(*)}&{=} [a]_{j_1, \ldots, j_{2k+k'}} \prod_{q \in [k]} [\delta]_{j_q, j_{q+k}} \prod_{q \in [2k+1,2k+k']} [M(g)]_{i_q, j_q} \\
        &= [a]_{j_1, \ldots, j_k, j_1, \ldots, j_k, j_{2k+1}, \ldots j_{2k+k'}} \prod_{q \in [2k+1,2k+k']} [M(g)]_{i_q, j_q} \\
        &= [\contract{k}{a}]_{j_{2k+1}, \ldots j_{2k+k'}} \prod_{q \in [2k+1,2k+k']} [M(g)]_{i_q, j_q} \\
        &= g \cdot \contract{k}{a} \\
        &= g \cdot \contract{k}{D(g^{-1} \cdot \bar\imath)} \\
        &= (g \cdot \contract{k}{D})(\bar\imath)
    \end{align*}
    Note that $(*)$ happens because $[M(g)]_{i,j}[M(g)]_{i,k} = M(g)^\top M(g) = \delta$ because they are orthogonal matrices, and the next step follows from Kronecker delta identities.
    Therefore, since contractions are equivariant to the generators of $G_{N,d}$, it is equivariant to the group.

    Finally, we will prove (\ref{prop:linear_contractions}). Let $D,Q,\alpha,$ and $\beta$ be defined as above, let $\bar\imath$ be a pixel index of $(\alpha D + \beta Q)$, and let $a,b \in \mathcal{T}_{d,k,p}$ be the tensors of $D$ and $Q$ at that pixel index. Then:
    \begin{align*}
        \qty[\contract{k}{\alpha D + \beta Q}(\bar\imath)]_{i_{2k+1}, \ldots, i_{2k+k'}} &= \qty[\contract{k}{\alpha D(\bar\imath) + \beta Q(\bar\imath)}]_{i_{2k+1}, \ldots, i_{2k+k'}} \\
        &= \qty[\contract{k}{\alpha a + \beta b}]_{i_{2k+1}, \ldots, i_{2k+k'}} \\
        &= \qty[\alpha a + \beta b]_{i_1, \ldots, i_k, i_1, \ldots, i_k,i_{2k+1}, \ldots, i_{2k+k'}} \\
        &= \alpha \qty[a]_{i_1, \ldots, i_k, i_1, \ldots, i_k,i_{2k+1}, \ldots, i_{2k+k'}} + \beta [b]_{i_1, \ldots, i_k, i_1, \ldots, i_k,i_{2k+1}, \ldots, i_{2k+k'}} \\
        &= \alpha \qty[\contract{k}{a}]_{i_{2k+1}, \ldots, i_{2k+k'}} + \beta [\contract{k}{b}]_{i_{2k+1}, \ldots, i_{2k+k'}} \\
        &= \alpha \qty[\contract{k}{D(\bar\imath)}]_{i_{2k+1}, \ldots, i_{2k+k'}} + \beta [\contract{k}{Q(\bar\imath)}]_{i_{2k+1}, \ldots, i_{2k+k'}} \\
        &= \qty[\qty(\alpha \contract{k}{D} + \beta \contract{k}{Q})(\bar\imath)]_{i_{2k+1}, \ldots, i_{2k+k'}}
    \end{align*}
    Thus we have shown (\ref{prop:linear_contractions}).
\end{proof}

\begin{lemma}\label{lemma:even_filter}
    Given $A \in \mathcal{A}_{N,d,k,p}$ a geometric image and $C \in \mathcal{A}_{M,d,k',p'}$ a geometric filter where $M=N+1$, there exists $C' \in \mathcal{A}_{M,d,k',p'}$ such that $A * C' = A * C$ and $C'(\bar \imath)$ is the zero \tensor{k'}{p'}, for $\bar \imath \in [0,N]^d \setminus [0, N-1]^d$. 
    That is, $C'$ is totally defined by $N^d$ pixels, and every pixel with an $N$ in the index is equal to the zero \tensor{k'}{p'}.
\end{lemma}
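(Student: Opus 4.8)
The plan is to exploit the periodicity of $A$ on the $d$-torus in order to ``fold'' the redundant boundary pixels of the filter onto the interior. First I would reindex the convolution sum: substituting $\bar b = \bar a + \bar m$ in Definition~\ref{def:convolution}, and noting that $M = N+1 = 2m+1$ forces $N$ to be even with $m = N/2$, the filter index $\bar b$ ranges over $[0,N]^d$ and
\begin{equation*}
    (A * C)(\bar\imath) = \sum_{\bar b \in [0,N]^d} A(\bar\imath - \bar b + \bar m) \otimes C(\bar b) ~.
\end{equation*}
The crucial observation is that since $A$ is defined on $(\mathbb{Z}/N\mathbb{Z})^d$, the factor $A(\bar\imath - \bar b + \bar m)$ depends on $\bar b$ only through its residue modulo $N$. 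Within the index box $[0,N]^d$ the only coincidences modulo $N$ occur coordinatewise between the value $0$ and the value $N$; every value in $\{1,\ldots,N-1\}$ is alone in its residue class.

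Then I would define $C'$ by collapsing each residue class onto its representative in $[0,N-1]^d$. For $\bar c \in [0,N-1]^d$ set
\begin{equation*}
    C'(\bar c) = \sum_{\substack{\bar b \in [0,N]^d \\ \bar b \equiv \bar c \!\!\pmod N}} C(\bar b) ~,
\end{equation*}
and set $C'(\bar b) = 0$ whenever some coordinate of $\bar b$ equals $N$; concretely this moves the weight on each face of the filter indexed by an $N$ back onto the opposite face indexed by a $0$. Since $C'(\bar c)$ is a finite sum of \tensor{k'}{p'}s it is again a \tensor{k'}{p'}, so $C' \in \mathcal{A}_{N+1,d,k',p'}$ and it satisfies the stated vanishing property on $[0,N]^d \setminus [0,N-1]^d$.

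To finish, I would substitute this $C'$ into the reindexed convolution. Only the pixels $\bar c \in [0,N-1]^d$ survive, and using bilinearity of the outer product I would pull the defining sum of $C'$ outside, then replace $A(\bar\imath - \bar c + \bar m)$ by $A(\bar\imath - \bar b + \bar m)$ for each $\bar b \equiv \bar c \pmod N$ (legal precisely because $A$ is periodic). The resulting double sum over pairs $(\bar c, \bar b)$ reassembles into the single sum over $\bar b \in [0,N]^d$, because every $\bar b$ belongs to exactly one residue class, yielding $(A*C')(\bar\imath) = (A*C)(\bar\imath)$ for every pixel $\bar\imath$.

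The argument is largely bookkeeping, so the main thing to get right is the residue-class combinatorics: verifying that $\bar b \mapsto \bar b \bmod N$ induces a genuine partition of $[0,N]^d$ whose blocks are indexed bijectively by $[0,N-1]^d$, so that the folding neither drops nor double-counts any filter pixel. The only ingredient beyond bookkeeping is the periodicity of $A$ on the torus, which is exactly what makes the boundary pixels redundant and hence available to be eliminated.
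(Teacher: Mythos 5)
Your proposal is correct and follows essentially the same argument as the paper: both exploit the torus periodicity of $A$ to group the filter pixels into residue classes (the paper phrases this as indices with $\pm m$ in centered coordinates, you as $0$ versus $N$ in shifted coordinates), sum $C$ over each class to define $C'$ supported on $[0,N-1]^d$, and use bilinearity of the outer product to reassemble the convolution sum. The only difference is the trivial change of coordinates $\bar b = \bar a + \bar m$ and your slightly more explicit bookkeeping of the partition, which does not change the substance of the proof.
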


\begin{proof}
    Let $A$ and $C$ be defined as above. Thus 
    \begin{equation}
        N = M - 1 = 2m + 1 - 1 = 2m
    \end{equation}
    Consider the convolution definition (\ref{def:convolution}) where we have $A(\bar\imath - \bar a)$ where $\bar\imath \in [0,N-1]^d$ and $\bar a \in [-m,m]^d$. Since $A$ is on the $d$-torus, then whenever the $\ell^{th}$ index of $\bar a = -m$ we have:
    \begin{align*}
        \qty(\bar \imath_\ell - \bar a_\ell) \mod N &= \qty(\bar \imath_\ell - (-m)) \mod N \\
        &= \qty(\bar \imath_\ell + m) \mod 2m \\
        &= \qty(\bar \imath_\ell + m - 2m) \mod 2m \\
        &= \qty(\bar \imath_\ell - m) \mod N
    \end{align*}
    Thus, any time there is an index $\bar a$ with a value $\pm m$, we have an equivalence class under the torus with all other indices with flipped sign of the $m$ in any combination. If $\qty{\bar a}$ is this equivalence class, we may group these terms in the convolution sum:
    \begin{equation*}
        \sum_{\bar a' \in \qty{\bar a}} A(\bar \imath - \bar a') \otimes C(\bar a' + \bar m) = \sum_{\bar a' \in \qty{\bar a}} A(\bar \imath - \bar a) \otimes C(\bar a' + \bar m) = A(\bar \imath - \bar a) \otimes \qty(\sum_{\bar a' \in \qty{\bar a}} C(\bar a' + \bar m))
    \end{equation*}
    Thus, we may pick a single pixel of the convolutional filter $C$, set it equal to $\sum_{\bar a' \in \qty{\bar a}} C(\bar a' + \bar m)$, and set all other pixels of the equivalence class to the zero \tensor{k'}{p'} without changing the convolution. We choose the nonzero pixel to be the one whose index has all $-m$ instead of $m$. Thus we can define the filter $C$ by $N^d$ pixels rather than $(N+1)^d$ pixels, and we have our result.
\end{proof}

\begin{lemma}\label{lemma:equality_of_filters}
    Let there be a space of geometric images $\mathcal{A}_{N,d,k,p}$ and let $C_1,C_2 \in \mathcal{A}_{M,d,k+k',p\,p'}$ with $M = 2m+1$ for positive integer $m$.
    Then $\contract{k}{A*C_1} = \contract{k}{A*C_2}$ for all $A \in \mathcal{A}_{N,d,k,p}$ if and only if $C_1 = C_2$.
\end{lemma}

Here is a quick proof sketch of the forward direction. 
We assume for the sake of contradiction that $C_1$ and $C_2$ are different so they must have at least one differing component.
Then we use the fact that $\contract{k}{A*C_1} = \contract{k}{A*C_2}$ holds for all possible inputs to define an input $A$ that isolates that component to get a contradiction.

\begin{proof}
    Let $C_1,C_2$ be defined as above. 
    The reverse direction is immediate, so we focus our attention on the forward direction.
    Suppose $\contract{k}{A*C_1} = \contract{k}{A*C_2}$ for all $A \in \mathcal{A}_{N,d,k,p}$. 
    Assume for the sake of contradiction that $C_1 \not = C_2$, so $C_1 - C_2 \not = \vec{0}$, where $\vec{0}$ is the zero filter.
    Thus there must be at least one component of one pixel that is nonzero.
    Suppose this is at pixel index $\bar b + \bar m$ and $(C_1 - C_2)(\bar b + \bar m) = c$. 
    Suppose the nonzero component is at index $j_1, \ldots, j_{k+k'}$.
    Let $a$ be a \tensor{k}{p} where $[a]_{i_1, \ldots, i_k}$ is nonzero and all other indices are 0. 
    Now suppose $A \in \mathcal{A}_{N,d,k,p}$ such that for pixel index $\bar \imath$ of $A, A(\bar \imath - \bar b) = a$ and all other pixels are the zero tensor. Thus:
    \begin{align*}
        \vec{0} &= (\contract{k}{A*C_1} - \contract{k}{A*C_2})(\bar\imath) \\
        \overset{\ref{prop:linear_convolution_filters},\ref{prop:linear_contractions}}&{=} \contract{k}{A * (C_1 - C_2)}(\bar\imath) \\
        &= \contract{k}{\qty(A * (C_1 - C_2)}(\bar \imath)) \\
        &= \contract{k}{\sum_{\bar a \in [-m,m]^d} A(\bar \imath - \bar a) \otimes (C_1 - C_2)\qty(\bar a + \bar m)} \\
        &= \contract{k}{A(\bar \imath - \bar b) \otimes (C_1 - C_2)(\bar b + \bar m)} \\
        &= \contract{k}{a \otimes c} ~.
    \end{align*}
    Note that the penultimate step removing the sum is because $A(\bar\imath - \bar a) = 0$ the zero tensor everywhere other than $A(\bar\imath - \bar b)$.
    Therefore, since the only nonzero entry of $a$ is at index $i_1, \ldots, i_k$, then at index $j_{k+1},\ldots,j_{k+k'}$ of the resulting tensor we have:
    \begin{align*}
        \vec{0} = \contract{k}{a \otimes c} = [a]_{i_1 \ldots i_k}[c]_{j_1, \ldots,j_{k+k'}} ~.
    \end{align*}
    Since $[a]_{i_1, \ldots, i_k}$ is nonzero and $[c]_{j_1, \ldots, j_{k+k'}}$ is nonzero, this index is nonzero. This is a contradiction, so we conclude that $C_1 = C_2$ which finishes the proof.
\end{proof}

\begin{proposition*}[Restatement of \ref{prop:convolution_iff_translation_invariant}]
    A function $f:\mathcal A_{N,d,k,p}\to \mathcal{A}_{N,d,k',p'}$ is a translation equivariant linear function if and only if $f(A) = \contract{k}{A*C}$ for some geometric filter $C \in \mathcal{A}_{M,d,k+k',p\,p'}$. 
    When $N$ is odd, $M=N$, otherwise $M=N+1$.
\end{proposition*}

\begin{proof}\label{proof:convolution_iff_translation_invariant}
    Let $\mathcal{F} = \qty{ f:\mathcal{A}_{N,d,k,p} \to \mathcal{A}_{N,d,k',p'}}$ where each function $f$ is linear and equivariant to translations. 
    Let $\mathcal{G} = \qty{ g: \mathcal{A}_{N,d,k,p} \to \mathcal{A}_{N,d,k',p'}}$ where each $g$ is defined as $g(A) = \contract{k}{A * C}$ for some $C \in \mathcal{A}_{M,d,k+k',p\,p'}$. 
    If $N$ is odd then $M=N$, otherwise $M=N+1$. It suffices to show that $\mathcal{F} = \mathcal{G}$.

    First we will show that $\mathcal{G} \subseteq \mathcal{F}$. 
    Let $g \in \mathcal{G}$. By properties \eqref{prop:linear_convolution} and \eqref{prop:linear_contractions} both convolutions and contractions are linear. 
    Additionally, by properties \eqref{prop:conv_is_translation_equiv} and \eqref{prop:equivariant_contractions} convolutions and contractions are both equivariant to translations. 
    Thus $g \in \mathcal{F}$, so $\mathcal{G} \subseteq \mathcal{F}$.
    
    Now we will show that $\dim(\mathcal{F}) = \dim(\mathcal{G})$. 
    Let $f \in \mathcal{F}$. 
    By Definition \ref{def:tensor}, $\mathcal{T}_{d,k,p} \cong \qty(\mathbb{R}^d)^{\otimes k}$ equipped with the group action of $O(d)$.
    Then by Definition \ref{def:geometric_image}, $\mathcal{A}_{N,d,k,p}$ is the space of functions $A:[N]^d \to \mathcal{T}_{d,k,p}$ where $[N]^d$ has  the structure of the $d$-torus. 
    Therefore, $\mathcal{A}_{N,d,k,p} \cong \qty(\mathbb{R}^N)^{\otimes d} \times \qty(\mathbb{R}^d)^{\otimes k}$ equipped with the group action of $G_{N,d}$.
    Thus, $f: \qty(\mathbb{R}^N)^{\otimes d} \times \qty(\mathbb{R}^d)^{\otimes k} \to \qty(\mathbb{R}^N)^{\otimes d} \times \qty(\mathbb{R}^d)^{\otimes k'}$. 
    Since $f$ is linear, the dimension of the space of functions $\mathcal{F}$ is $N^dd^{k'}N^dd^k = N^{2d}d^{k+k'}$. 
    If this is unclear, consider the fact that the linearity of $f$ means that it has an associated matrix $F$ of that dimension.
    However, since each $f$ is translation equivariant, the function to each of the $N^d$ pixels in the output must be the same. 
    Thus we actually have that $\dim(\mathcal{F}) = \frac{N^{2d}d^{k+k'}}{N^d} = N^d d^{k+k'}$.

    Now we look at $\dim(\mathcal{G})$.
    Each function $g \in \mathcal{G}$ is defined by the convolution filter $C \in \mathcal{A}_{M,d,k+k',p\,p'}$ and $\dim(\mathcal{A}_{M,d,k+k',p\,p'}) = \dim(\mathcal{A}_{N,d,k+k',p\,p'}) = N^d d^{k+k'}$, with the first equality following from Lemma \ref{lemma:even_filter} in both the even and odd case.
    Clearly $\dim(\mathcal{G})$ is upper-bounded by the dimension of the convolution filters, but does it have to be equal?
    In other words, is it possible that two linearly independent convolution filters result in linearly dependent functions $g$?
    We will now show that this is not possible.

    Let $g_1,g_2 \in \mathcal{G}$ be defined by two linearly independent filters $C_1,C_2 \in \mathcal{A}_{M,d,k+k',p\,p'}$, and we would like to show that $g_1$ and $g_2$ are linearly independent as well.
    Suppose that there exists $\alpha,\beta \in \mathbb{R}$ such that $\alpha\,g_1(A) + \beta\,g_2(A) = \vec{0}$ for all $A \in \mathcal{A}_{N,d,k,p}$.
    It suffices to show that $\alpha = \beta = 0$. Thus:
    \begin{align*}
        \contract{k}{A*\vec{0}} &= \contract{k}{A*(0\,C_3)} \\
        \overset{\ref{prop:linear_convolution_filters},\ref{prop:linear_contractions}}&{=} 0\, \contract{k}{A*C_3} &&  \\
        &= \vec{0} \\
        &= \alpha\, g_1(A) + \beta\, g_2(A) \\
        &= \alpha\, \contract{k}{A*C_1} + \beta\, \contract{k}{A*C_2} \\
        \overset{\ref{prop:linear_convolution_filters},\ref{prop:linear_contractions}}&{=} \contract{k}{A*\qty(\alpha\,C_1 + \beta\,C_2)} ~.
    \end{align*}
    Thus by Lemma \ref{lemma:equality_of_filters}, $\vec{0} = \alpha\, C_1 + \beta\, C_2$. Since $C_1$ and $C_2$ are linearly independent, this implies that $\alpha = \beta = 0$.
    Thus $g_1,g_2$ must be linearly independent.
    Therefore, $\dim(\mathcal{G}) = N^d d^{k+k'}$ and since $\mathcal{G} \subseteq \mathcal{F}$ we have $\mathcal{F} = \mathcal{G}$.
\end{proof}

\begin{lemma}\label{lemma:action_distributes_over_conv}
    Given $g \in B_d$, $A \in \mathcal{A}_{N,d,k,p},$ and $ C \in \mathcal{A}_{M,d,k',p'}$, the action $g$ distributes over the convolution of $A$ with $C$:
    \begin{equation}
        g \cdot (A * C) = (g \cdot A) * (g \cdot C) ~.
    \end{equation}
\end{lemma}

\begin{proof}
    Let $A \in \mathcal{A}_{N,d,k,p}$ be a geometric image, let $C \in \mathcal{A}_{M,d,k',p'}$, let $g \in B_d$, and let $\bar\imath$ be any pixel index of $A$. By Definition \ref{def:action_on_tensor_image} we have
    \begin{align*}
        (g\cdot (A* C) )\qty(\bar \imath) &= g\cdot\qty( (A*C)\qty(g^{-1} \cdot \bar \imath)) \\
        &= g \cdot \qty(\sum_{\bar a\in [-m,m]^d} A\qty(g^{-1}\cdot \bar \imath - \bar a) \otimes C\qty(\bar a + \bar m))\\
        &= \sum_{\bar a\in [-m,m]^d} g \cdot \qty(A\qty(g^{-1}\cdot \bar \imath - \bar a) \otimes C\qty(\bar a + \bar m))\\
        &= \sum_{\bar a\in [-m,m]^d} g \cdot A\qty(g^{-1}\cdot \bar \imath - \bar a) \otimes g \cdot C\qty(\bar a + \bar m)
    \end{align*}
    Now let $\bar a' = g \cdot \bar a$. Thus $g^{-1} \cdot \bar a' = g^{-1} \cdot g \cdot \bar a = \bar a$. Then:
    \begin{align*}
        (g \cdot (A * C))(\bar\imath) &= \sum_{\bar a\in [-m,m]^d}  g \cdot A\qty(g^{-1}\cdot \bar \imath - \bar a) \otimes g \cdot C\qty(\bar a + \bar m) \\
        &= \sum_{g^{-1} \cdot \bar a' \in [-m,m]^d} g \cdot A\qty(g^{-1}\cdot \bar \imath - g^{-1} \cdot \bar a') \otimes g \cdot C\qty(g^{-1} \cdot \bar a' + \bar m) \\
        &= \sum_{g^{-1} \cdot \bar a' \in [-m,m]^d} g \cdot A\qty(g^{-1}\cdot \bar \imath - g^{-1}\cdot \bar a') \otimes g \cdot C\qty(g^{-1}\cdot \bar a' + g^{-1}\cdot \bar m) \\
        &= \sum_{g^{-1} \cdot \bar a' \in [-m,m]^d} g \cdot A\qty(g^{-1}\cdot \qty(\bar \imath -\bar a')) \otimes g \cdot C\qty(g^{-1}\cdot \qty(\bar a' + \bar m)) \\
        &= \sum_{g^{-1} \cdot \bar a' \in [-m,m]^d} \qty(g \cdot A)\qty(\bar\imath - \bar a') \otimes \qty(g \cdot C)\qty(\bar a'+ \bar m) \\
        &= \sum_{\bar a' \in [-m,m]^d} \qty(g \cdot A)\qty(\bar\imath - \bar a') \otimes \qty(g \cdot C)\qty(\bar a'+ \bar m) \\
        &= \qty((g \cdot A) * (g \cdot C))(\bar\imath)
    \end{align*}
    For the penultimate step, we note that $g^{-1} \cdot \bar a' \in [-m,m]^d$ compared to $\bar a' \in [-m,m]^d$ is just a reordering of those indices in the sum. Thus we have our result for pixel $\bar\imath$, so it holds for all pixels.
\end{proof}

Now we will prove Theorem \ref{theorem:linear_equiv_characterization}.

\begin{theorem*}[Restatement of \ref{theorem:linear_equiv_characterization}]
    A function $f:\mathcal{A}_{N,d,k,p} \to \mathcal{A}_{N,d,k',p'}$ is linear and $G_{N,d}$-equivariant if and only if it can be written as $\contract{k}{A * C}$ for some $B_d$-isotropic $C \in \mathcal{A}_{M,d,k+k',p\,p'}$, where $M=N$ if $N$ is even and $M=N+1$ otherwise.
\end{theorem*}

\begin{proof}
    First we will show the reverse direction. Let $C \in \mathcal{A}_{M,d,k+k',p\,p'}$ be $B_d$-isotropic, and let a function $f$ be defined as $f(A) = \contract{k}{A * C}$. Let $g \in B_d, A \in \mathcal{A}_{N,d,k,p}$. Then by the invariance of $C$ we have:
    \begin{align*}
        \contract{k}{(g \cdot A) * C} &= \contract{k}{(g \cdot A) * (g \cdot C)} \\
        \overset{\ref{lemma:action_distributes_over_conv}}&{=} \contract{k}{g \cdot (A * C)} \\
        \overset{\ref{prop:equivariant_contractions}}&{=} g \cdot \contract{k}{A * C} ~.
    \end{align*}
    Hence $f$ is $B_d$-equivariant. 
    By \eqref{prop:conv_is_translation_equiv} and \eqref{prop:equivariant_contractions} $f$ is also translation equivariant, so it is equivariant to $G_{N,d}$. 
    Also, by the linearity of convolution \eqref{prop:linear_convolution} and contraction \eqref{prop:linear_contractions}, $f$ is linear. 
    Thus the reverse direction holds.
    
    Now we will prove the forward direction. Let $f: \mathcal{A}_{N,d,k,p} \to \mathcal{A}_{N,d,k',p'}$ be a linear $G_{N,d}$-equivariant function. 
    Thus $f$ must be translation equivariant, so by Proposition \ref{prop:convolution_iff_translation_invariant} we can write $f$ as $f(A) = \contract{k}{A * C}$ for some $C \in \mathcal{A}_{M,d,k+k',p\,p'}$. 
    Now it suffices to show that $C$ is $B_d$-isotropic.
    Let $A \in \mathcal{A}_{N,d,k,p}$, let $g \in B_d$, and let $B = g ^{-1} \cdot A$. Then by the equivariance of $f$ we have:
    \begin{align*}
         \contract{k}{A * C} &= \contract{k}{\qty(g \cdot B) * C} \\
        &= g \cdot \contract{k}{B*C}  \\
        \overset{\ref{prop:equivariant_contractions}}&{=} \contract{k}{g \cdot \qty(B * C)} \\
        \overset{\ref{lemma:action_distributes_over_conv}}&{=} \contract{k}{(g \cdot B) * (g \cdot C)} \\
        &= \contract{k}{A * (g \cdot C)} ~.
    \end{align*}
    Thus by Lemma \ref{lemma:equality_of_filters}, we have $g \cdot C = C$.
    Therefore, $C$ is $B_d$-isotropic, and this completes the proof.
\end{proof}

\subsection{Extension of Vector Neuron Nonlinearities to Tensors}\label{appendix:nonlinear}

We extend the vector neuron nonlinearities of \cite{deng2021vectorneuronsgeneralframework} for any tensor as follows.

\begin{definition}
    Let $A_i \in \mathcal{A}_{N,d,k,p}$ for $i=1,\ldots,c$ be $c$ channels of input geometric images. 
    Let $\alpha_i, \beta_i \in \mathbb{R}$ for $i=1,\ldots,c$ be learned scalar parameters, and $Q = \sum_{i=1}^c \alpha_i \, A_i, K=\sum_{i=1}^c \beta_i \, A_i$.
    Then the nonlinearity $\sigma: \qty(\mathcal{A}_{N,d,k,p})^c \to \mathcal{A}_{N,d,k,p}$ is defined:
    \begin{equation}
        \sigma(\qty(A_i)_{i=1}^c) = \begin{cases}
            Q & \text{ if } \contract{k}{Q \otimes K} \geq 0 \\
            Q - \contract{k}{Q \otimes \frac{K}{\norm{K}_2}} \frac{K}{\norm{K}_2} & \text{ otherwise } 
        \end{cases}
    \end{equation}
    where $\norm{\cdot}_2$ is the tensor norm \eqref{eq:frobenius_norm}.
\end{definition}

To get $c$ output channels, we can repeat this function $c$ times with different learned parameters $\alpha_i,\beta_i$.
Now we can show that this function is $G_{N,d}$-equivariant.

\begin{proposition} \label{prop:vn_nonlinear_equiv}
    Let $A_i \in \mathcal{A}_{N,d,k,p}$, $g \in G_{N,d}$, and $\alpha_i,\beta_i \in \mathbb{R}$ for $i=1,\ldots,c$. 
    Then $\sigma\qty(\qty(g \cdot A_i)_{i=1}^c) = g \cdot \sigma\qty(\qty(A_i)_{i=1}^c)$.
\end{proposition}

\begin{proof}
    Let $A_i \in \mathcal{A}_{N,d,k,p},$ and $\alpha_i,\beta_i \in \mathbb{R}$ for $i=1,\ldots,c$.
    It is clear to see that $\sigma$ is translation equivariant because all the operations are pixelwise.
    Thus we will show that $\sigma$ is equivariant to $g \in B_d$.
    First, note that applying $g$ to all $A_i$ results in $g \cdot Q$ and $g \cdot K$.
    Now
    \begin{equation*}
        \contract{k}{g \cdot Q \otimes g \cdot K} = \contract{k}{g \cdot \qty(Q \otimes K)} = g \cdot \contract{k}{Q \otimes K} = \contract{k}{Q \otimes K}
    \end{equation*}
    Note that the last step is because both $Q$ and $K$ are \tensors{k}{p}, so $\contract{k}{Q \otimes K}$ is a \tensor{0}{+}.
    Hence, if $\contract{k}{Q \otimes K} \geq 0$, then $\sigma\qty(\qty(g \cdot A_i)_{i=1}^c) = g \cdot Q = g \cdot \sigma\qty(\qty(A_i)_{i=1}^c)$ and $\sigma$ is $B_d$-equivariant.
    Now suppose $\contract{k}{Q \otimes K} < 0$: 
    \begin{align*}
        \sigma(\qty(g \cdot A_i)_{i=1}^c) &= g \cdot Q - \contract{k}{g \cdot Q \otimes \frac{g \cdot K}{\norm{g \cdot K}_2}} \frac{g \cdot K}{\norm{g \cdot K}_2} \\
        &= g \cdot Q - \contract{k}{g \cdot Q \otimes g \cdot \frac{K}{\norm{K}_2}} g \cdot \frac{K}{\norm{K}_2} \\
        &= g \cdot Q - g \cdot \qty(\contract{k}{Q \otimes \frac{K}{\norm{K}_2}} \frac{K}{\norm{K}_2}) \\
        &=  g \cdot \qty(Q - \contract{k}{Q \otimes \frac{K}{\norm{K}_2}} \frac{K}{\norm{K}_2}) \\
        &= g \cdot \sigma\qty(\qty(A_i)_{=1}^c)
    \end{align*}
    Thus $\sigma$ is $B_d$-equivariant.
\end{proof}

\subsection{LayerNorm equivariance}\label{proof:layer_norm_equivariance}

We define equivariant layer norm as the following

\begin{definition}\label{def:layer_norm}
    Let $(A_i)_{i=1}^c$ be a set of \tensor{1}{p} images. 
    Let $\bar{A}_i(\bar\imath) = A_i(\bar\imath) - \sum_{j=1}^c \sum_{\bar\jmath \in [N]^d} A_j(\bar\jmath)$ be the mean centered \tensor{1}{p} image.
    Then the covariance is a \tensor{2}{+} given by 
    \begin{equation}\label{eq:layer_norm}
        \Sigma = \frac{1}{c N^d}\sum_{i=1}^c \sum_{\bar\imath \in [N]^d} \qty(\bar{A}_i \otimes \bar{A}_i)(\bar\imath) ~.
    \end{equation}
    We calculate $\Sigma^{-\frac{1}{2}}$ by performing an eigenvalue decomposition $\Sigma = U \Lambda U^\top$, where $\Lambda$ is a diagonal matrix with the eigenvalues along the diagonal.
    We take the inverse of each eigenvalue and then its square root, then multiply $U \Lambda^{-\frac{1}{2}}U^\top$ to get $\Sigma^{-\frac{1}{2}}$.
    Finally, we scale the vectors by $\Sigma^{-\frac{1}{2}}$:
    \begin{equation}
        [B_i(\bar\imath)]_\ell = \qty[\bar{A}_i(\bar\imath)]_j \qty[\Sigma^{-\frac{1}{2}}]_{j,\ell} ~,
    \end{equation}
    and output $B_i$ for $i=1$ to $c$.
\end{definition}

\begin{proposition}
    The LayerNorm is $G_{N,d}$-equivariant.
\end{proposition}

\begin{proof}
    Let $(A_i)_{i=1}^c$ be a set of \tensor{1}{p} images. Clearly this function will be translation equivariant because  Let $g \in B_d$. Let $\bar{A}_i$ be as defined in Definition \ref{def:layer_norm} and let $\bar\imath$ be a pixel index of $\bar{A}_i$. 
    Then 
    \begin{align}
        (g \cdot \bar{A}_i)(\bar\imath) &= g \cdot \bar{A}_i(g^{-1} \cdot \bar\imath) \\
        &= g \cdot \qty(A_i(g^{-1} \cdot \bar\imath) - \sum_{j=1}^c \sum_{\bar\jmath \in [N]^d} A_j(\bar\jmath)) \\
        &= g \cdot A_i(g^{-1} \cdot \bar\imath) - \sum_{j=1}^c \sum_{\bar\jmath \in [N]^d} g \cdot A_j(\bar\jmath) \\
        &= g \cdot A_i(g^{-1} \cdot \bar\imath) - \sum_{j=1}^c \sum_{g^{-1} \cdot \bar\jmath \in [N]^d} g \cdot A_j(g^{-1} \cdot \bar\jmath) \label{eq:layer_norm_sum_equality} \\
        &= (g \cdot A_i)(\bar\imath) - \sum_{j=1}^c \sum_{\bar\jmath \in [N]^d} (g \cdot A_j)(\bar\jmath) ~.
    \end{align}
    Note that \ref{eq:layer_norm_sum_equality} follows because $\sum_{\bar\jmath \in [N]^d} A_j(\bar\jmath) = \sum_{g^{-1} \cdot \bar\jmath \in [N]^d} A_j(g^{-1} \cdot \bar\jmath)$.
    Thus the mean centering is equivariant to $B_d$. 
    Likewise,
    \begin{align}
        g \cdot \Sigma &= g \cdot \frac{1}{cN^d} \sum_{i=1}^c \sum_{\bar\imath \in [N]^d} (\bar{A}_i \otimes \bar{A}_i)(\bar\imath) \\
        &= \frac{1}{cN^d} \sum_{i=1}^c \sum_{\bar\imath \in [N]^d} g \cdot (\bar{A}_i \otimes \bar{A}_i)(\bar\imath) \\
        &= \frac{1}{cN^d} \sum_{i=1}^c \sum_{g^{-1} \cdot \bar\imath \in [N]^d} g \cdot (\bar{A}_i \otimes \bar{A}_i)(g^{-1} \cdot \bar\imath) \\
        &= \frac{1}{cN^d} \sum_{i=1}^c \sum_{\bar\imath \in [N]^d} (g \cdot \bar{A}_i \otimes g \cdot \bar{A}_i)(\bar\imath)
    \end{align}
    Finally, the inverse square root operation is $B_d$-equivariant.
    If we write it as the function $f$ such that $f(\Sigma) = f(U\Lambda U^\top) = U \Lambda^{-\frac{1}{2}} U^\top = \Sigma^{-\frac{1}{2}}$. Then:
    \begin{align}
        g \cdot f\qty(\Sigma) &= g \cdot f\qty(U \Lambda U^\top) \\
        &= M(g) U \Lambda^{-\frac{1}{2}} U^\top M(g)^\top \\
        &= \qty(M(g) U) \Lambda^{-\frac{1}{2}} \qty(M(g) U )^\top \\
        &= f\qty(\qty(M(g) U) \Lambda \qty(M(g) U )^\top) \\
        &= f(g \cdot \Sigma)
    \end{align}
    Thus $[g \cdot B_i(\bar\imath)]_\ell = \qty[g \cdot \bar{A}_i(\bar\imath)]_\ell \qty[g \cdot \Sigma^{-\frac{1}{2}}]_{j,\ell}$ which is the same as rotating all the input $A_i$, so LayerNorm is equivariant.
\end{proof}

\subsection{Max pool equivariance}\label{proof:max_pool_equivariance}

The $O(d)$-invariance of the tensor norm allows the max pool layer to be $B_d$-equivariant.
With a careful definition of translations for the larger and smaller images, we can also get translational equivariance, as we see in the following proposition.

\begin{proposition}\label{prop:max_pool_equivariance}
    Let $g \in B_d$ and let $\tau \in (\mathbb{Z}/(N/b)\mathbb{Z})^d$ be the translation on the $d$-torus with sidelengths of $N/b$. 
    For images $A \in \mathcal{A}_{N,d,k,p}$, we define the action of this translation as $(L_\tau A)(\bar\imath) = A(\bar\imath - b \, \tau)$.
    Then $\text{max\,pool}_b$ \eqref{eq:max_pool} is equivariant to both of these groups.
\end{proposition}

Before we prove this proposition, we need a quick lemma about the tensor norm \ref{def:frobenius_norm}

\begin{lemma}\label{lemma:frobenius_norm_invariant}
    The tensor norm \eqref{eq:frobenius_norm} is $O(d)$-invariant.
\end{lemma}

\begin{proof}
    Let $c$ be a \tensor{k}{p} and let $g \in O(d)$. Then:
    \begin{equation}
        \norm{g \cdot c}_2 = \sqrt{\contract{k}{g \cdot c \otimes g \cdot c)}} \overset{\eqref{prop:equivariant_contractions}}{=} \sqrt{g \cdot \contract{k}{c \otimes c}} \overset{(*)}{=} \sqrt{\contract{k}{c \otimes c}} = \norm{c}_2
    \end{equation}
    The $(*)$ equality is because $\contract{k}{c \otimes c}$ is always a scalar.
    This completes the proof.
\end{proof}

Now we will prove the proposition.

\begin{proof}
    First we will show equivariance to translations.
    Let $\tau \in (\mathbb{Z}/(N/b)\mathbb{Z})^d$ be the translation on the $d$-torus with sidelengths of $N/b$ as defined in the proposition..
    Let $A \in \mathcal{A}_{N,d,k,p}$ and let $\bar\imath$ be a pixel index.
    Then following the definitions we have:
    \begin{align*}
        &\qty(L_\tau\, \text{max\,pool}_b(A))(\bar\imath) \\
        =&\, \text{max\,pool}_b(A)(\bar\imath - \tau)\\
        =&\, A\qty(b\,(\bar\imath-\tau) + \argmax_{\bar a \in [0, b-1]^d} \norm{A(b\,(\bar\imath - \tau) + \bar a)}_2) \\
        =&\, A\qty(b\,\bar\imath-b\,\tau + \argmax_{\bar a \in [0, b-1]^d} \norm{A(b\,\bar\imath - b\,\tau + \bar a)}_2) \\
        =&\, \qty(L_\tau A)\qty(b\,\bar\imath + \argmax_{\bar a \in [0, b-1]^d} \norm{\qty(L_\tau A)(b\,\bar\imath + \bar a)}_2) \\
        =&\, \text{max\,pool}_b(L_\tau A)(\bar\imath) ~.
    \end{align*}
    Thus $\text{max\,pool}_b$ is equivariant to translations.
    Now let $g \in B_d$. Thus by Lemma \ref{lemma:frobenius_norm_invariant} we have:
    \begin{align*}
        &\qty(g \cdot \text{max\,pool}_b(A))(\bar\imath) \\
        =&\, g \cdot \text{max\,pool}_b(A)(g^{-1} \cdot \bar\imath) \\
        =&\, g \cdot A\qty(b\,(g^{-1} \cdot \bar\imath) + \argmax_{\bar a \in [0, b-1]^d} \norm{A\qty(b\,(g^{-1} \cdot \bar\imath ) + \bar a)}_2) \\
        =&\, g \cdot A\qty(g^{-1} \cdot \qty(b\,\bar\imath + g \cdot \argmax_{\bar a \in [0, b-1]^d} \norm{A\qty(g^{-1}\cdot \qty(b\,\bar\imath + g \cdot \bar a))}_2)) \\
        \overset{\ref{lemma:frobenius_norm_invariant}}{=}&\, (g \cdot A)\qty(b\,\bar\imath + g \cdot \argmax_{\bar a \in [0, b-1]^d} \norm{g \cdot A\qty(g^{-1}\cdot \qty(b\,\bar\imath + g \cdot \bar a))}_2) \\
        =&\, (g \cdot A)\qty(b\,\bar\imath + g \cdot \argmax_{\bar a \in [0, b-1]^d} \norm{(g \cdot A)\qty(b\,\bar\imath + g \cdot \bar a)}_2) \\
        \overset{(*)}{=}&\, (g \cdot A)\qty(b\,\bar\imath + g \,g^{-1} \cdot \argmax_{\bar a \in [0, b-1]^d} \norm{(g \cdot A)\qty(b\,\bar\imath + \bar a)}_2) \\
        =&\, (g \cdot A)\qty(b\,\bar\imath + \argmax_{\bar a \in [0, b-1]^d} \norm{(g \cdot A)\qty(b\,\bar\imath + \bar a)}_2) \\
        =&\, \text{max\,pool}_b(g \cdot A)(\bar\imath) ~.
    \end{align*}
    For the $(*)$ equality we note that $\argmax_{\bar a} \norm{A(g \cdot \bar a)}_2 = g^{-1} \cdot \argmax_{\bar a} \norm{A(\bar a)}_2$ because the pixel index returned by the left side would have to be transformed by $g$ to maximize $\norm{A(\bar a)}_2$.
    Hence the max pool is $B_d$-equivariant, and this concludes the proof.
\end{proof}

\section{Mathematical details of related work}\label{appendix_sec:related}
The most common method to design equivariant maps is via group convolution, on the group or on the homogeneous space where the features lie. Regular convolution of a vector field $f: (\mathbb{Z} /N \mathbb{Z})^d \to \mathbb{R}^c$ and a filter $\phi: (\mathbb{Z} /N \mathbb{Z})^d \to \mathbb{R}^c$ is defined as
\begin{equation}\label{eq:1}
    (f*\phi)(x)= \sum_{y\in(\mathbb{Z} /N \mathbb{Z})^d} \underbrace{\langle f(y), \phi(x-y)\rangle}_{\text{scalar product of vectors}}=\sum_{y\in(\mathbb{Z} /N \mathbb{Z})^d} \sum_{j=1}^c \underbrace{f^j(y) \phi^j(x-y)}_{\in \mathbb{R}}
\end{equation}
Our generalization of convolution replaces this scalar product of vectors by the outer product of tensors. 

\subsection{Clifford Convolution}
Probably the most related
work is by Brandstetter et al. \cite{ref1}, which replaces the scalar product in \eqref{eq:1} by the geometric product of multivector inputs and multivector filters of a Clifford Algebra. It considers multivector fields, i.e.: vector fields $f:\mathbb{Z}^2 \to (Cl_{p,q}(\mathbb{R}))^c$. The real Clifford Algebra $Cl_{p,q}(\mathbb{R})$ is an associative algebra generated by $p+q=d$ orthonormal basis elements: $e_1, \ldots, e_{p+q}\in \mathbb{R}^d$ with the relations:
\begin{eqnarray}
     e_i\otimes e_i = +1& (i\leq p),  \\
     e_j \otimes e_j = -1& (p<j\leq n),\\
     e_i \otimes e_j = -e_j \otimes e_i & (i\neq j).
\end{eqnarray}
For instance, $Cl_{2,0}(\mathbb{R})$ has the basis $\{1, e_1, e_2, e_1\otimes e_2\}$ and is isomorphic to the quaternions $\mathbb{H}$.

The Clifford convolution replaces the elementwise product of scalars of the usual convolution of \eqref{eq:1} by the geometric product of multivectors in the Clifford Algebra:
\begin{equation}
    f*\phi(x)= \sum_{y\in(\mathbb{Z} /N \mathbb{Z})^d} \sum_{j=1}^c \underbrace{f^j(y) \otimes \phi^j(y-x)}_{\in Cl_{p,q}(\mathbb{R})},
\end{equation}
where $f:\mathbb{Z}^2 \to (Cl_{p,q}(\mathbb{R}))^c$ and $\phi:\mathbb{Z}^2 \to (Cl_{p,q}(\mathbb{R}))^c$

The Clifford Algebra $Cl_{p,q}(\mathbb{R})$  is a quotient of the tensor algebra \begin{equation}
    T(\mathbb{R}^d)= \bigoplus_{k\geq 0} \underbrace{\mathbb{R}^d\otimes \ldots \otimes {\mathbb{R}^d}}_{k \text{ times}}= \bigoplus_{k\geq 0} (\mathbb{R}^d)^{\otimes k},
\end{equation} 
by the two-side ideal $\langle\{v\otimes v - Q(v): v \in \mathbb{R}^d\}\rangle$, where the quadratic form $Q$ is defined by $Q(e_i)=+1$,if  $i\leq p$, and $Q(e_j)=-1$, else $p<j\leq n$.
Our geometric images are functions $A:(\mathbb{Z}/N\mathbb{Z})^d \to \mathcal T_{d,k,p}$, where $\mathcal{T}_{d,k,p}= (\mathbb{R}^d)^{\otimes k}\subset T(\mathbb{R}^d)$. They can be related with the Clifford framework by seeing them as $N$-periodic functions from $\mathbb{Z}^d$ whose image is projected via the quotient map on the Clifford Algebra. This projection can be seen as a contraction of tensors. 

The Clifford convolution is not equivariant under multivector rotations or reflections. But the authors derive a constraint on the filters for $d=2$ which allows to build generalized Clifford convolutions which are equivariant with respect to rotations or reflections of the multivectors. That is, they prove equivariance of a Clifford layer under orthogonal transformations if the filters satisfies the constraint: $\phi^i (Rx) = R\phi^i(x)$.
 
\subsection{Unified Fourier Framework}
Part of our work can be studied under the unified framework for group equivariant networks on homogeneous spaces derived from a Fourier perspective proposed in \cite{ref2}. The idea is to consider general tensor-valued feature fields, before and after a convolution.  Their fields are functions $f : G/H \to V$ over the homogeneous space $G/H$ taking values in the vector space $V$ and their filters are kernels $\kappa: G \to Hom (V,V')$. Essentially, their convolution replaces the scalar product of vectors of traditional convolution by appliying an homomorphism. In particular, if $G$ is a finite group and $H=\{0\}$, they define convolution as
\begin{equation}
    \kappa* f(x)= \frac{1}{|G|}\sum_{y\in G}\underbrace{\kappa(x^{-1}\,y)\,f(y)}_{\in V'}.  
\end{equation}
\cite{ref2} gives a complete characterization of the space of kernels for equivariant convolutions. In our framework, the group is $\mathbb{Z}/N\mathbb{Z}$ and the kernel is an outer product by a filter $C$: $\kappa(g)A(g)=A(g)\otimes C(g)$. Note that $\mathbb{Z}/N\mathbb{Z}$ is neither a homogeneous space of $O(d)$ nor of $B^d$.

We can analyze our problem from a spectral perspective, in particular we can describe all linear equivariant using representation theory, using similar tools as in the proof of Theorem 1 in \cite{kondor2018convolution}. This theorem states that convolutional structure is a sufficient and a necessary condition for equivariance to the action of a compact group. Some useful references about group representation theory are \cite{ref5}, a classical book about the theory of abstract harmonic analysis and \cite{ref6}, about the particular applications of it.

\subsection{Linear equivariant maps}
In this work we define an action  over tensor images of $O(d)$, by rotation of tensors in each pixel; of $B^d$ by rotating the grid of pixels and each tensor in the pixel; and of $(\mathbb{Z}/N \mathbb{Z})^d$ by translation of the grid of pixels. The action of each one of these groups $G$ over $\mathcal{T}_{d,k,p}$
\begin{equation}
    \Phi_{d,k,p}: G \to GL_{con} (\mathcal{T}_{d,k,p}),
\end{equation}
can be decomposed into irreducible representations of $G$:
\begin{equation}
    \Phi_{d,k,p} \equiv \bigoplus_{\pi \in \hat{G}} m_{d,k,p}(\pi) \,\pi.
\end{equation}
That is, there is a basis of the Hilbert space $\mathcal{T}_{d,k,p}$ in which the action of $G$ is defined via a linear sparse map. In the case of $G$ finite, for all $g\in G$ there is a matrix $P$ splitting the representation in the Hilbert space into its irreducible components
\begin{equation}
    P^{-1}\, \Phi_{d,k,p}(g)\,P = \bigoplus_{\pi \in \hat{G}} m_{d,k,p}(\pi)\, \pi(g)
\end{equation}

Consider now linear maps between Tensor images:
\begin{equation}
    \mathcal{C}:\mathcal{T}_{d,k,p} \to \mathcal{T}_{d',k',p'}
\end{equation}
Linear equivariant maps satisfy that  $\mathcal{C}\circ \Phi_{d,k,p} = \Phi_{d',k',p'} \circ \mathcal{C}$. That is, if $\tilde{\mathcal{C}}$ is the representation of $\mathcal{C}$ in the above basis,
\begin{equation}
    \tilde{\mathcal{C}} \circ \bigoplus_{\pi \in G} m_{d,k,p}(\pi)\, \pi = \bigoplus_{\pi \in G} m_{d',k',p'}(\pi)\, \pi \circ \tilde{\mathcal{C}}.
\end{equation}
By Schur's Lemma, this implies that $\mathcal{C}\equiv \bigoplus_{\pi \in G} m_{d,k,p}(\pi)\, Id_{d_\pi}$.

The power of representation theory is not limited to compact groups. Mackey machinery allow us to study for instance semidirect products of compact groups and other groups, and in general to relate the representations of a normal subgroup with the ones of the whole group. This is the spirit of \cite{steerable}, which makes extensive use of the induced representation theory. An introduction to this topic can be found in Chapter 7 in \cite{ref5}.

\subsection{Steerable CNNs}
The work in \cite{steerable} deals exclusively with signals $f: \mathbb{Z}^2 \to \mathbb{R}^k$. 
They consider the action of $G=$ \textit{p4m} on $\mathbb{Z}^2$ by translations, rotations by 90 degrees around any point, and reflections. 
This group is a semidirect product of $\mathbb{Z}^2$ and $B_2$, so every $x\in \text{\textit{p4m}}$ can be written as $x=t\,r$, for $t\in \mathbb{Z}^2$ and $r\in B_2$. 
They show that equivariant maps with respect to representations $\rho$ and $\rho'$ of rotations and reflections $B_2$ lead to equivariant maps with respect to certain representations of $G$, $\pi$ and $\pi'$. 
This means that if we find a linear map $\phi: f \mapsto \phi\, f$ such that $\phi \,\rho(h)\,f=\rho'(h)\,\phi \,f$ for all $h\in B_2$, then for the representation of $G$ $\pi'$ defined by
\begin{equation}\label{eq:2}
    \pi'(t\,r)\,f(y) = \rho(r)\, [f ((t\,r)^{-1}\,y)], \quad t\,r\in G, y \in \mathbb{Z}^2,
\end{equation}
we automatically have that $\phi\, \pi(g)\,f=\pi'(g)\,\phi\, f$ for all $g\in G$. This is the representation of $G$ induced by the representation $\rho$ of $B_2$

Note the similarity between the definition of the action of $B_d$  on tensor images \ref{def:action_bd_on_tensor} and equation \eqref{eq:2}. The convolution with a symmetric filter produces easily an equivariant map with respect to  the action of the semidirect product of $\mathbb{Z}^d$ and $B_d$ on the tensor images. 

\subsection{Approximate symmetries}
The recent work \cite{wang2022approximately} studies approximately equivariant networks which are biased towards preserving symmetry but are not strictly constrained to do so. They define a relaxed group convolution which is approximately equivariant in the sense that
\begin{equation}
    \|\rho_X(g)\, f*_G \,\Psi (x) - f*_G\,\Psi(\rho_Y(y)\,x \|<\epsilon.
\end{equation}
They use a  classical convolution but with different kernels for different group elements. 

\section{Experimental Details}\label{appendix:experiments}

\subsection{Data}

The data is the PDEBench files \verb|2D_CFD_Rand_M0.1_Eta0.01_Zeta0.01_periodic_128_Train.hdf5| and \verb|2D_CFD_Rand_M1.0_Eta0.1_Zeta0.1_periodic_128_Train.hdf5| which can be found at \url{https://darus.uni-stuttgart.de/dataset.xhtml?persistentId=doi:10.18419/darus-2986} \cite{takamoto2024pdebench}.
We used the first $128$ trajectories as training data, the next $32$ trajectories as a validation set, and the next $128$ trajectories as a test data set.
The density and pressure fields are mean-centered and scaled to have variance 1 based on the training and validation datasets.
The velocity field is not mean-centered because the only rotationally isotropic vector is the zero vector, but it is scaled to have variance 1 in the components.

\subsection{Models}

Model specifics are described below. 
For equivariant models, we always use ReLU for scalars and the Vector Neuron activation for non-scalars.
For equivariant encoder and decoder blocks, we use $3 \times 3$ filters instead of $1 \times 1$ filters because for some order and parity pairs, there are no $1 \times 1$ $B_d$-isotropic filters.
All convolutions use biases except for the UNet.
For equivariant models, the bias is a scale of the mean tensor of that image.
Additional details are in Table \ref{tab:models}.

\begin{itemize}
    \item \textbf{Dilated ResNet \cite{stachenfeld2022learned}:} 
    The model starts with two ``encoder" convolutions with $1 \times 1$ filters and ReLU activations.
    There are four blocks, each consisting of seven convolutions with dilations of $1,2,4,8,4,2,1$ with associated ReLU activations.
    There are residual connections connecting each block.
    The model concludes with two ``decoder" convolutions with $1 \times 1$ filters and a ReLU activation between the two.
    \item \textbf{ResNet \cite{he2015resnet}:} This model consists of 8 blocks of 2 convolutions each with residual connections between each block. 
    Each block also has LayerNorm and a GeLU activation \cite{hendrycks2023gelu}. 
    We put the LayerNorm and activation prior to the convolution (preactivation order \cite{he2016preactivation_order}) following \cite{gupta2022pdearena}.
    This model also uses two ``encoder" $1 \times 1$ convolutions and two ``decoder" $1 \times 1$ convolutions.
    \item \textbf{UNet LayerNorm \cite{gupta2022pdearena}:} This model is referred to as ``UNetBase" in \cite{gupta2022pdearena}.
    This starts with an embedding block with a convolution with a $3 \times 3$ filter following by LayerNorm and a GeLU activation \cite{hendrycks2023gelu}.
    Next comes a $\text{max\,pool}_2$ followed by two convolutions with LayerNorm and GeLU activation.
    This is process is repeated for 4 total downsamples, and notably the number of convolution channels is doubled for every down sample.
    Then the process happens in reverse, with max pooling replaced with transposed convolution to double the spatial size instead of halving it each time.
    See \cite{dumoulin2018guideconvolutionarithmeticdeep} for a description of transposed convolution.
    The number of convolution channels is also halved each time we upsample.
    The final kicker is that there are also residual connections from before each downsample to after each upsample for the appropriate spatial size.
    The model concludes with a final convolution.
    In the equivariant model we do not include the LayerNorm because it hurt the performance.
    \item \textbf{UNet \cite{ronneberger2015unet}:} This model is the same as the one above, except is uses BatchNorm instead of LayerNorm and the convolutions are without biases.
\end{itemize}

\begin{table}[h]
    \begin{tabular}{c|c|c|c|c|c}
        model & params & CNN channels & norm & bias & learning rate \\
        \hline
        DilResNet & 1,043,651  & 64 & - & Yes & \verb|2e-3| \\
        DilResNet Equiv & 979,347 & 48 & - & Mean & \verb|1e-3| \\
        ResNet & 2,401,155  & 128 & LayerNorm & Yes & \verb|1e-3| \\
        ResNet Equiv & 2,558,703  & 100 & LayerNorm & Mean & \verb|7e-4| \\
        UNet LayerNorm & 31,053,251 & 64 & LayerNorm & Yes & \verb|8e-4| \\
        UNet LayerNorm Equiv & 27,077,139 & 48 & - & Mean & \verb|4e-4| \\
        UNet & 31,046,400 & 64 & BatchNorm & No & \verb|8e-4| \\
        UNet Equiv & 27,066,864 & 48 & - & No & \verb|3e-4| \\
    \end{tabular}
    \caption{Comparison of various models. The number of channels of each model was chosen so that the equivariant and non-equivariant models have roughly the same number of parameters.}
    \label{tab:models}
\end{table}

\subsection{Training}

For a loss function, we use the sum of mean squared error loss, or SMSE.
This loss sums over the tensor components and the channels and takes the mean over the spatial components.
If $\qty{A_i}_{i=1}^c$ are the true \tensor{k_i}{p_i} images and $\qty{\hat{A}_i}_{i=1}^c$ are our predicted \tensor{k_i}{p_i} images, then the $\mathcal{L}_{\text{smse}}$ is defined as,
\begin{equation}
    \mathcal{L}_{\text{smse}}\qty(\qty{A_i}_{i=1}^c, \qty{\hat{A}_i}_{i=1}^c) = \sum_{i=1}^c \frac{1}{N^d} \sum_{\bar\imath} \norm{A_i(\bar\imath) - \hat{A}_i(\bar\imath)}_2^2 ~,
\end{equation}
where $\norm{\cdot}_2$ is the tensor norm.
When calculating a rollout loss, we simply sum the loss of each rollout step.

We follow a similar training regime as in \cite{gupta2022pdearena}.
We train for 50 epochs using the AdamW optimizer \cite{loshchilov2018adamw} with a weight decay of \verb|1e-5| and a cosine decay schedule \cite{loshchilov2017cosinedecay} with 5 epochs of warmup.
Learning rates were tuned for each model, searching for values between \verb|1e-4| and \verb|2e-3|, and are included in Table \ref{tab:models}.

We trained on 4 RTX A5000 graphics cards with a batch size of 8, for an effective batch size of 32.
Experiments we averaged over 3 trials, using the same training data each time.
It possible that different optimizers, learning rate schedules, batch sizes, or other hyperparameters may perform better on the task, but we held those fixed and only tuned the learning rate since our focus is on comparing the equivariant and non-equivariant models.

\end{document}